\def\eqref#1{equation~\ref{#1}}
\def\1{\bm{1}}
\DeclareMathAlphabet{\mathsfit}{\encodingdefault}{\sfdefault}{m}{sl}
\SetMathAlphabet{\mathsfit}{bold}{\encodingdefault}{\sfdefault}{bx}{n}
\newcommand{\E}{\mathbb{E}}
\newcommand{\R}{\mathbb{R}}
\newcommand{\KL}{D_{\mathrm{KL}}}
\DeclareMathOperator*{\argmax}{arg\,max}
\DeclareMathOperator*{\argmin}{arg\,min}
\newtheorem{theorem}{Theorem}
\newtheorem{lemma}{Lemma}[theorem]
\newtheorem{definition}{Definition}
\def\IB{\text{IB}}
\def\IBB{\emph{IB}}
\def\betath{\beta^{(\text{th})}}
\def\betanew{\beta^{(\text{new})}}
\def\pstar{p^*_\beta(z|x)}
\def\thetaa{{\boldsymbol{\theta}}}
\def\pthe{p_\thetaa(z|x)}
\def\Dthe{{\Delta\thetaa}}
\def\partialthe{{\frac{\partial}{\partial\thetaa}}}
\def\E{\mathbb{E}}
\def\log{\text{log}}
\def\logg{\emph{log}}
\def\e{\epsilon}
\def\G{\mathcal{G}}
\def\X{\mathcal{X}}
\def\KL{\operatorname{KL}}
\def\Q{\mathcal{Q}}
\def\R{\mathbb{R}}
\def\S{\mathcal{S}}
\def\X{\mathcal{X}}
\def\Y{\mathcal{Y}}
\def\Z{\mathcal{Z}}
\def\QQ{\mathcal{Q}_{\mathcal{Z}|\mathcal{X}}}
\def\QQA{\mathcal{Q}^{(0)}_{\mathcal{Z}|\mathcal{X}}}
\def\QQB{\mathcal{Q}^{(1)}_{\mathcal{Z}|\mathcal{X}}}
\def\QQC{\mathcal{Q}^{(2)}_{\mathcal{Z}|\mathcal{X}}}
\def\QQD{\mathcal{Q}^{(3)}_{\mathcal{Z}|\mathcal{X}}}
\def\QQf{\mathcal{Q}^{(f)}_{\mathcal{X}|\mathcal{Z}}}
\def\titlemath#1{\texorpdfstring{#1}{Lg}}
\def\beq#1{\begin{equation}\label{#1}}
\def\eeq{\end{equation}}
\def\beqa{\begin{equation*}\begin{aligned}}
\def\eeqa{\end{aligned}\end{equation*}}
\newcommand\norm[1]{\left\lVert#1\right\rVert}
\title{Phase Transitions for the Information Bottleneck in representation learning}
\author{Tailin Wu \\
MIT\\
\texttt{tailin@mit.edu} \\
\And
Ian Fischer\\
Google Research\\
\texttt{iansf@google.com} \\
}
\begin{document}

\maketitle

\begin{abstract}
In the Information Bottleneck (IB), when tuning the relative strength between compression and prediction terms, how do the two terms behave, and what's their relationship with the dataset and the learned representation?
In this paper, we set out to answer these questions by studying multiple phase transitions in the IB objective: $\IB_\beta[p(z|x)]=I(X;Z)-\beta I(Y;Z)$ 
defined on the encoding distribution $p(z|x)$ for input $X$, target $Y$ and representation $Z$, where sudden jumps of $\frac{dI(Y;Z)}{d\beta}$ and prediction accuracy are observed with increasing $\beta$.
We introduce a definition for IB phase transitions as a qualitative change of the IB loss landscape, and show that the transitions correspond to the onset of learning new classes. Using second-order calculus of variations, we derive a formula that provides a practical condition for IB phase transitions, and draw its connection with the Fisher information matrix for parameterized models.
We provide two perspectives to understand the formula, revealing that each IB phase transition is finding a component of maximum (nonlinear) correlation between $X$ and $Y$ orthogonal to the learned representation, in close analogy with canonical-correlation analysis (CCA) in linear settings. Based on the theory, we present an algorithm for discovering phase transition points.
Finally, we verify that our theory and algorithm accurately predict phase transitions in categorical datasets, predict the onset of learning new classes and class difficulty in MNIST, and predict prominent phase transitions in CIFAR10.
\end{abstract}

\section{Introduction}
\label{sec:introduction}

The Information Bottleneck ($\IB$) objective~\citep{tishby2000information}:
\begin{equation}
\label{eq:phase_IB_beta}
\IB_\beta[p(z|x)]:= I(X;Z) - \beta I(Y;Z)
\end{equation}
explicitly trades off model compression ($I(X;Z)$, $I(\cdot;\cdot)$ denoting mutual information) with predictive performance ($I(Y;Z)$) using the Lagrange multiplier $\beta$, where $X,Y$ are observed random variables, and $Z$ is a learned representation of $X$.
The IB method has proved effective in a variety of scenarios, including improving the robustness against adversarial attacks \citep{alemi2016deep,fischer2018the}, learning invariant and disentangled representations~ \citep{achille2018emergence,achille2018information}, underlying information-based geometric clustering \citep{strouse2017information}, improving the training and performance in adversarial learning ~\citep{peng2018variational}, and facilitating skill discovery~\citep{sharma2019dynamics} and learning goal-conditioned policy~\citep{goyal2019infobot} in reinforcement learning.

From Eq. (\ref{eq:phase_IB_beta}) we see that when $\beta\to0$ it will encourage $I(X;Z)=0$ which leads to a trivial representation $Z$ that is independent of $X$, while when $\beta\to+\infty$, it reduces to a maximum likelihood objective\footnote{%
  For example, in classification, it reduces to cross-entropy loss.
} that does not constrain the information flow.
Between these two extremes, how will the IB objective behave? 
Will prediction and compression performance change smoothly, or do there exist interesting transitions in between? 
In \citet{wu2019learnability}, the authors observe and study the learnability transition, i.e. the $\beta$ value such that the IB objective transitions from a trivial global minimum to learning a nontrivial representation.
They also show how this first phase transition relates to the structure of the dataset.
However, to answer the full question, we need to consider the full range of $\beta$. 

\paragraph{Motivation.}
To get a sense of how $I(Y;Z)$ and $I(X;Z)$ vary with $\beta$, we train Variational Information Bottleneck (VIB) models~\citep{alemi2016deep} on the CIFAR10 dataset~\citep{cifar}, where each experiment is at a different $\beta$ and random initialization of the model.
Fig.~\ref{fig:cifar_multiple_phase_transition_noise} shows the $I(X;Z)$, $I(Y;Z)$ and accuracy vs. $\beta$, as well as $I(Y;Z)$ vs. $I(X;Z)$ for CIFAR10 with 20\% label noise (see Appendix \ref{app:cifar_details} for details).

\begin{figure}[t]
\begin{center}
\includegraphics[width=1\columnwidth]{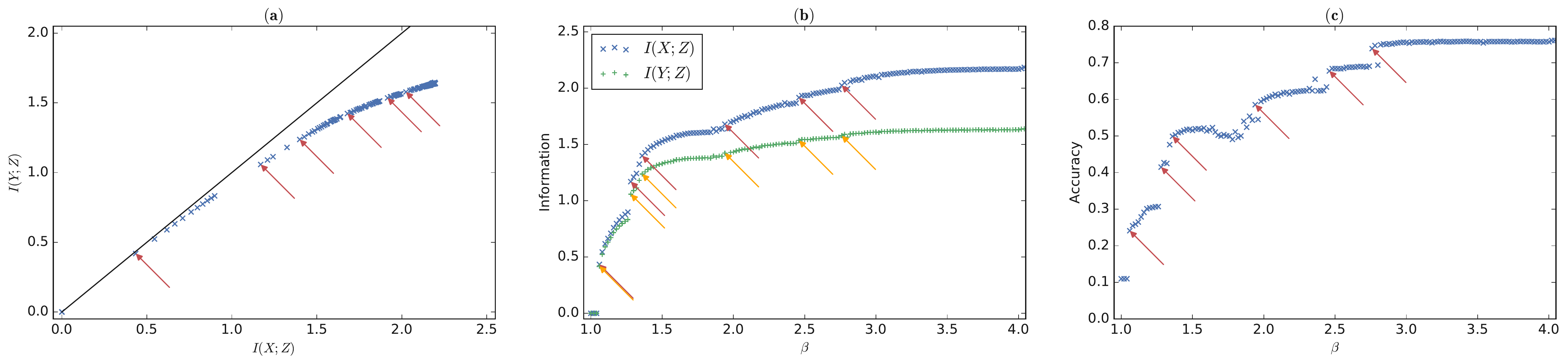}
\end{center}
\caption{%
CIFAR10 plots \textbf{(a)} showing the information plane, as well as $\beta$ vs \textbf{(b)} $I(X;Z)$ and $I(Y;Z)$, and \textbf{(c)} accuracy, all on the training set with 20\% label noise.
The arrows point to empirically-observed phase transitions.
The vertical lines correspond to phase transitions found with Alg.~\ref{alg:coordinate_descent}.
}
\label{fig:cifar_multiple_phase_transition_noise}
\vskip -0.05in
\end{figure}

From Fig.~\ref{fig:cifar_multiple_phase_transition_noise}\textbf{(b)(c)}, we see that as we increase $\beta$, instead of going up smoothly, both $I(X;Z)$ and $I(Y;Z)$ show multiple phase transitions, where the slopes $\frac{dI(X;Z)}{d\beta}$ and $\frac{dI(Y;Z)}{d\beta}$ are discontinuous and the accuracy has discrete jumps.
The observation lets us refine our question: When do the phase transitions occur, and how do they depend on the structure of the dataset?
These questions are important, since answering them will help us gain a better understanding of the IB objective and its close interplay with the dataset and the learned representation. 

Moreover, the IB objective belongs to a general form of two-term trade-offs in many machine learning objectives: $L=\text{Prediction-loss}+\beta\cdot\text{Complexity}$,
where the complexity term generally takes the form of regularization.
Usually, learning is set at a specific $\beta$.
Many more insights can be gained if we understand the behavior of the prediction loss and model complexity with varying $\beta$, and how they depend on the dataset.
The techniques developed to address the question in the IB setting may also help us understand the two-term tradeoff in other learning objectives.

\paragraph{Contributions.}
In this work, we begin to address the above question in  IB settings. Specifically:

\begin{itemize}

\item We identify a \emph{qualitative} change of the IB loss landscape w.r.t. $p(z|x)$ for varying $\beta$ as IB phase transitions (Section \ref{sec:phase_definition}). 

\item Based on the definition, we introduce a quantity $G[p(z|x)]$ and use it to prove a theorem giving a practical condition for IB phase transitions.
We further reveal the connection between $G[p(z|x)]$ and the Fisher information matrix when $p(z|x)$ is parameterized by $\thetaa$ (Section \ref{sec:phase_definition}).

\item We reveal the close interplay between the IB objective, the dataset and the learned representation, by showing that in IB, each phase transition corresponds to learning a new nonlinear component of maximum correlation between $X$ and $Y$, orthogonal to the previously-learned $Z$, and each with decreasing strength (Section \ref{sec:phase_understanding}).

\end{itemize}

To the best of our knowledge, our work provides the first theoretical formula to address IB phase transitions in the most general setting. In addition, we present an algorithm for iteratively finding the IB phase transition points (Section \ref{sec:alg_for_discovering_transition}).
We show that our theory and algorithm give tight matches with the observed phase transitions in categorical datasets, predict the onset of learning new classes and class difficulty in MNIST, and predict prominent transitions in CIFAR10 experiments (Section \ref{sec:phase_experiments}).

\section{Related Work}

The Information Bottleneck Method~\citep{tishby2000information} provides a tabular method based on the Blahut-Arimoto (BA) Algorithm~\citep{blahut1972computation} to numerically solve the IB functional for the optimal encoder distribution $P(Z|X)$, given the trade-off parameter $\beta$ and the cardinality of the representation variable $Z$.
This work has been extended in a variety of directions, including to the case where all three variables $X,Y,Z$ are multivariate Gaussians~\citep{chechik2005information}, cases of variational bounds on the IB and related functionals for amortized learning~\citep{alemi2016deep,achille2018emergence,fischer2018the}, and a more generalized interpretation of the constraint on model complexity as a Kolmogorov Structure Function~\citep{achille2018dynamics}.
Previous theoretical analyses of IB include~\citet{rey2012meta}, which looks at IB through the lens of copula functions, and \citet{shamir2010learning}, which starts to tackle the question of how to bound generalization with IB.
We will make practical use of the original IB algorithm, as well as the amortized bounds of the Variational Informormation Bottleneck~\citep{alemi2016deep} and the Conditional Entropy Bottleneck~\citep{fischer2018the}.

Phase transitions, where key quantities change discontinuously with varying relative strength in the two-term trade-off, have been observed in many different learning domains, for multiple learning objectives.
In \cite{rezende2018taming}, the authors observe phase transitions in the latent representation of $\beta$-VAE for varying $\beta$.
\cite{strouse2017information} utilize the kink angle of the phase transitions in the Deterministic Information Bottleneck (DIB)~\citep{strouse2017deterministic} to determine the optimal number of clusters for geometric clustering. 
\citet{tegmark2019pareto} explicitly considers critical points in binary classification tasks using a discrete information bottleneck with a non-convex Pareto-optimal frontier.
In \cite{achille2018emergence}, the authors observe a transition on the tradeoff of $I(\theta;X,Y)$ vs. $H(Y|X,\theta)$ in InfoDropout.
Under IB settings, \cite{chechik2005information} study the Gaussian Information Bottleneck, and analytically solve the critical values $\beta_i^c=\frac{1}{1-\lambda_i}$, where $\lambda_i$ are eigenvalues of the matrix $\Sigma_{x|y}\Sigma_x^{-1}$, and $\Sigma_x$ is the covariance matrix.
This work provides valuable insights for IB, but is limited to the special case that $X$, $Y$ and $Z$ are jointly Gaussian. 
Phase transitions in the general IB setting have also been observed, which \citet{tishbyinfo} describes as ``information bifurcation''.
In \cite{wu2019learnability}, the authors study the first phase transition, i.e. the learnability phase transition, and provide insights on how the learnability depends on the dataset.
Our work is the first work that addresses all the IB phase transitions in the most general setting, and provides theoretical insights on the interplay between the IB objective, its phase transitions, the dataset, and the learned representation.

\section{Formula for IB phase transitions}
\label{sec:phase_definition}

\subsection{Definitions}
Let $X\in\mathcal{X}, Y\in\mathcal{Y}, Z\in\mathcal{Z}$ be random variables denoting the input, target and representation, respectively, having a joint probability distribution $p(X, Y, Z)$, with $\X\times\mathcal{Y}\times\mathcal{Z}$ its support. $X$, $Y$ and $Z$ satisfy the Markov chain $Z-X-Y$, i.e. $Y$ and $Z$ are conditionally independent given $X$. We assume that the integral (or summing if $X$, $Y$ or $Z$ are discrete random variables) is on $\X\times\mathcal{Y}\times\mathcal{Z}$. We use $x$, $y$ and $z$ to denote the instances of the respective random variables. The above settings are used throughout the paper. We can view the IB objective $\IB_\beta[p(z|x)]$ (Eq. \ref{eq:phase_IB_beta}) as a functional of the encoding distribution $p(z|x)$. To prepare for the introduction of IB phase transitions, we first define \emph{relative perturbation function} and  \emph{second variation}, as follows.

\begin{definition}
\textbf{Relative perturbation function:} For $p(z|x)$, its relative perturbation function $r(z|x)$ is a bounded function that maps $\mathcal{X}\times \mathcal{Z}$ to $\mathbb{R}$ and satisfies $\E_{z\sim p(z|x)}[r(z|x)]=0$. Formally, define $\QQ:=\{r(z|x): \mathcal{X}\times\mathcal{Z}\to \mathbb{R}\ \big|\ \E_{z\sim p(z|x)}[r(z|x)]=0, \text{and } \exists M>0 \text{ s.t. } \forall X\in\mathcal{X},Z\in\mathcal{Z}, |r(z|x)|\le M\}$. We have that $r(z|x)\in \QQ$ iff $r(z|x)$ is a relative perturbation function of $p(z|x)$.
The perturbed probability (density) is $p'(z|x)=p(z|x)\left(1+\epsilon\cdot r(z|x)\right)$ for some $\epsilon>0$.
\end{definition}

\begin{definition}
\textbf{Second variation:} Let functional $F[f(x)]$ be defined on some normed linear space $\mathscr{R}$. Let us add a perturbative function $\epsilon\cdot h(x)$ to $f(x)$, and now the functional 
$F[f(x)+\epsilon\cdot h(x)]$ can be expanded as

\begin{equation*}
\begin{aligned}
\Delta F[f(x)]&=F[f(x)+\epsilon \cdot h(x)]-F[f(x)]\\
&=\varphi_1[\e\cdot h(x)]+\varphi_2[\e\cdot h(x)]+ \varphi_r[\e\cdot h(x)]\norm{\e\cdot h(x)}^2
\end{aligned}
\end{equation*}

such that $\lim\limits_{\e\to0}\varphi_r[\e\cdot h(x)]=0$, where $\norm{\cdot}$ denotes the norm, $\varphi_1[\e\cdot h(x)]=\epsilon \frac{d F[f(x)]}{d\epsilon}$ is a linear functional of $\epsilon \cdot h(x)$, and is called the \emph{first variation}, denoted as $\delta F[f(x)]$. $\varphi_2[\e\cdot h(x)]=\frac{1}{2}\epsilon^2 \frac{d^2 F[f(x)]}{d\epsilon^2}$ is a quadratic functional of $\epsilon \cdot h(x)$, and is called the \emph{second variation}, denoted as $\delta^2 F[f(x)]$.
\end{definition}

We can think of the perturbation function $\e\cdot h(x)$ as an infinite-dimensional ``vector'' ($x$ being the indices), with $\e$ being its amplitude and $h(x)$ its direction. 
With the above preparations, we define the IB phase transition as a change in the local curvature on the global minimum of $\IB_\beta[p(z|x)]$.

\begin{definition}
\label{definition:phase_transition_0}
\textbf{IB phase transitions:} Let $r(z|x)\in \QQ$ be a perturbation function of $p(z|x)$, $\pstar$ denote the optimal solution of $\IB_\beta[p(z|x)]$ at $\beta$, where the IB functional $\IB[\cdot]$ is defined in Eq. (\ref{eq:phase_IB_beta}). The IB phase transitions $\beta_i^c$ are the $\beta$ values satisfying the following two conditions:

(1) $\forall r(z|x)\in \QQ$, $\delta^2\IB_\beta[p(z|x)]\big|_{p^*_{\beta}(z|x)}\ge0$;

(2) 
$\lim\limits_{\beta'\to\beta^+}\inf\limits_{r(z|x)\in \QQ}\delta^2\IB_{\beta'}[p(z|x)]\big|_{p^*_{\beta}(z|x)}=0^-$.

Here $\beta^+$ and $0^-$ denote one-sided limits.

\end{definition}

We can understand the $\delta^2\IB_\beta[p(z|x)]$ as a local ``curvature'' of the IB objective $\IB_\beta$ (Eq. \ref{eq:phase_IB_beta}) w.r.t. $p(z|x)$, along some relative perturbation $r(z|x)$.
A phase transition occurs when the convexity of $\IB_\beta[p(z|x)]$ w.r.t. $p(z|x)$ changes from a minimum to a saddle point in the neighborhood of its optimal solution $\pstar$ as $\beta$ increases from $\beta_c$ to $\beta_c+0^+$.
This means that there exists a perturbation to go downhill and find a better minimum.
We validate this definition empirically below.

\subsection{Condition for IB phase transitions}

The definition for IB phase transition (Definition \ref{definition:phase_transition_0}) indicates the important role  $\delta^2\IB_\beta[p(z|x)]$ plays on the optimal solution in providing the condition for phase transitions.
To concretize it and prepare for a more practical condition for IB phase transitions, we expand $\IB_\beta[p(z|x)(1+\epsilon\cdot r(z|x))]$ to the second order of $\epsilon$, giving:
\begin{lemma}
\label{lemma:second_order_variation}
For $\IBB_\beta[p(z|x)]$, the condition of $\ \forall r(z|x)\in \QQ$, $\delta^2\IBB_\beta[p(z|x)]\ge0$ is equivalent to $\beta\le G[p(z|x)]$.
The threshold function $G[p(z|x)]$ is given by:
\begingroup\makeatletter\def\f@size{9.8}\check@mathfonts
\begin{equation}
\begin{aligned}
\label{eq:G}
G[p(z|x)] &:= \inf_{r(z|x)\in\QQ} \G[r(z|x);p(z|x)] \\
\G[r(z|x);p(z|x)]&:= 
\frac {\E_{x,z\sim p(x,z)}[r^2(z|x)]-\E_{z\sim p(z)}\left[\left(\E_{x\sim p(x|z)}[r(z|x)]\right)^2\right]}
      {\E_{y,z\sim p(y,z)}\left[\left(\E_{x\sim p(x|y,z)}[r(z|x)]\right)^2\right]-\E_{z\sim p(z)}\left[\left(\E_{x\sim p(x|z)}[r(z|x)]\right)^2\right]}
\end{aligned}
\end{equation}
\endgroup
\end{lemma}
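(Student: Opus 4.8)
The strategy is to compute $\delta^2\IBB_\beta[p(z|x)]$ explicitly along an arbitrary relative perturbation $r(z|x)\in\QQ$ and then determine for which $\beta$ it stays nonnegative for every such $r$. Write $p'(z|x)=p(z|x)\bigl(1+\epsilon\,r(z|x)\bigr)$. The first step is to see how the quantities appearing in $I(X;Z)$ and $I(Y;Z)$ are perturbed. Because the perturbation is linear in $\epsilon$ and $\E_{z\sim p(z|x)}[r(z|x)]=0$, one gets the \emph{exact} identities $p'(z)=p(z)\bigl(1+\epsilon\,\bar r(z)\bigr)$ with $\bar r(z):=\E_{x\sim p(x|z)}[r(z|x)]$, and — using the Markov chain $Z-X-Y$, so that $p(z|x,y)=p(z|x)$ and $p(x|y,z)=p(x|y)p(z|x)/p(z|y)$ — $p'(z|y)=p(z|y)\bigl(1+\epsilon\,\tilde r(z|y)\bigr)$ with $\tilde r(z|y):=\E_{x\sim p(x|y,z)}[r(z|x)]$. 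From $\int p(z|x)r(z|x)\,dz=0$ one then records the auxiliary facts $\E_{z\sim p(z)}[\bar r(z)]=0$, $\E_{z\sim p(z|y)}[\tilde r(z|y)]=0$, and the tower identity $\bar r(z)=\E_{y\sim p(y|z)}[\tilde r(z|y)]$, which are needed at the end. (Positivity of $p'(z|x)$ only requires $\epsilon<1/M$, which is harmless, since $\delta^2$ is the $\epsilon^2$-coefficient.)

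Next I would expand $I(X;Z)=\E_{x,z\sim p(x,z)}\!\bigl[\log\tfrac{p(z|x)}{p(z)}\bigr]$ and $I(Y;Z)=\E_{y,z\sim p(y,z)}\!\bigl[\log\tfrac{p(z|y)}{p(z)}\bigr]$ to second order in $\epsilon$ using $\log(1+u)=u-\tfrac12u^2+O(u^3)$. Keeping only the $\epsilon^2$-coefficient ($\varphi_2=\delta^2$; the $O(\epsilon)$ terms constitute $\varphi_1$ and are irrelevant here) and integrating, the quadratic contributions combine with the marginal perturbations to give
\[
\delta^2 I(X;Z)=\tfrac{\epsilon^2}{2}\Bigl(\E_{x,z\sim p(x,z)}[r^2(z|x)]-\E_{z\sim p(z)}[\bar r^2(z)]\Bigr),\quad
\delta^2 I(Y;Z)=\tfrac{\epsilon^2}{2}\Bigl(\E_{y,z\sim p(y,z)}[\tilde r^2(z|y)]-\E_{z\sim p(z)}[\bar r^2(z)]\Bigr).
\]
Denoting the two parenthesized quantities by $N[r]$ and $D[r]$ — which are exactly the numerator and denominator of $\G[r(z|x);p(z|x)]$ in (\ref{eq:G}) — linearity of $\delta^2$ yields $\delta^2\IBB_\beta[p(z|x)]=\tfrac{\epsilon^2}{2}\bigl(N[r]-\beta\,D[r]\bigr)$.

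I would then note that both $N[r]$ and $D[r]$ are nonnegative by the law of total variance: $N[r]=\E_{z\sim p(z)}\bigl[\Var_{x\sim p(x|z)}[r(z|x)]\bigr]\ge0$, and, using $\bar r(z)=\E_{y\sim p(y|z)}[\tilde r(z|y)]$ from Step~1, $D[r]=\E_{z\sim p(z)}\bigl[\Var_{y\sim p(y|z)}[\tilde r(z|y)]\bigr]\ge0$. Consequently the condition ``$\forall r\in\QQ,\ \delta^2\IBB_\beta[p(z|x)]\ge0$'' is equivalent to ``$N[r]\ge\beta\,D[r]$ for all $r\in\QQ$'': directions with $D[r]=0$ impose no constraint, since $N[r]\ge0=\beta D[r]$, whereas for $D[r]>0$ the inequality reads precisely $\beta\le N[r]/D[r]=\G[r(z|x);p(z|x)]$. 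Taking the infimum over $r$ (with the convention $\G=+\infty$ when $D[r]=0$) gives the claimed equivalence $\beta\le G[p(z|x)]$.

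I expect the main obstacle to be the second-order bookkeeping in the middle step: one must correctly identify the induced perturbation $\tilde r(z|y)$ of $p(z|y)$ via the Markov chain — getting this wrong alters the denominator — and verify that every $\epsilon^2$ cross term is retained so that the contributions really do simplify to the stated expressions for $N[r]$ and $D[r]$ rather than leaving residual terms. A secondary, minor point is the treatment of the degenerate directions with $D[r]=0$ and the corresponding interpretation of the infimum in (\ref{eq:G}), where one checks that no such direction can ever violate the inequality.
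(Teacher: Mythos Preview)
Your proposal is correct and follows essentially the same route as the paper: compute $\delta^2\IBB_\beta$ explicitly to obtain $\tfrac{\epsilon^2}{2}\bigl(N[r]-\beta\,D[r]\bigr)$ with the induced perturbations $\bar r(z)=\E_{x\sim p(x|z)}[r(z|x)]$ and $\tilde r(z|y)=\E_{x\sim p(x|y,z)}[r(z|x)]$, observe $N[r],D[r]\ge 0$ (the paper phrases this via Jensen's inequality for the square, you via the law of total variance---same content), and then split on $D[r]>0$ versus $D[r]=0$ to get the equivalence with $\beta\le G[p(z|x)]$. The only cosmetic difference is that the paper factors the expansion through a separate all-orders lemma (their Lemma in Appendix~A) rather than doing the second-order expansion inline as you do.
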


The proof is given in Appendix \ref{app:lemma_G}, in which we also give Eq. (\ref{eq_app:G_empirical1}) for empirical estimation.
Note that Lemma \ref{lemma:second_order_variation} is very general and can be applied to any $p(z|x)$, not only at the optimal solution $\pstar$.

\paragraph{The Fisher Information matrix.}
In practice, the encoder $p_\thetaa(z|x)$ is usually parameterized by some parameter vector $\thetaa=(\theta_1,\theta_2,...\theta_k)^T\in\Theta$, e.g. weights and biases in a neural net, where $\Theta$ is the parameter field. An infinitesimal change of $\thetaa'\gets \thetaa +\Delta\thetaa$ induces a relative perturbation $\e\cdot r(z|x)\simeq\Delta\thetaa^T\frac{\partial \log p_\thetaa (z|x)}{\partial \thetaa}$ on $p_\thetaa(z|x)$, from which we can compute the threshold function $G_\Theta[p_\thetaa(z|x)]$:

\begin{lemma}
\label{lemma:G_theta}
For $\IBB_\beta[p_\thetaa(z|x)]$ objective, the condition of $\forall \Dthe\in\Theta$, $\delta^2\IBB_\beta[p_\thetaa(z|x)]\ge0$ is equivalent to $\beta\le G_\Theta[p_\thetaa(z|x)]$, where

\beq{eq_app:G_with_theta}
G_\Theta[p_{\thetaa}(z|x)]:=\inf_{\Delta\thetaa\in\Theta}\frac{\Delta\thetaa^T\left(\mathcal{I}_{Z|X}(\thetaa)-\mathcal{I}_{Z}(\thetaa)\right)\Delta\thetaa}{\Delta\thetaa^T\left(\mathcal{I}_{Z|Y}(\thetaa)-\mathcal{I}_{Z}(\thetaa)\right)\Delta\thetaa}=\lambda_\emph{max}^{-1}
\eeq

where $\mathcal{I}_{Z}(\thetaa):=\int dz p_\thetaa (z)\left(\frac{\partial \logg p_\thetaa(z)}{\partial \thetaa}\right)\left(\frac{\partial \logg p_\thetaa(z)}{\partial \thetaa}\right)^T$ is the Fisher information matrix of $\thetaa$ for $Z$, $\mathcal{I}_{Z|X}(\thetaa):= \int dxdz p(x) p_\thetaa(z|x)\left(\frac{\partial \logg p_\thetaa(z|x)}{\partial \thetaa}\right)\left(\frac{\partial \logg p_\thetaa(z|x)}{\partial \thetaa}\right)^T$, $\mathcal{I}_{Z|Y}(\thetaa):= \int dydz p(y) p_\thetaa(z|y)\left(\frac{\partial \logg p_\thetaa(z|y)}{\partial \thetaa}\right)\left(\frac{\partial \logg p_\thetaa(z|y)}{\partial \thetaa}\right)^T$  are the conditional Fisher information matrix \citep{fisherproperty} of $\thetaa$ for $Z$ conditioned on $X$ and $Y$, respectively.
$\lambda_\emph{max}$ is the largest eigenvalue of $C^{-1}\left(\mathcal{I}_{Z|Y}(\thetaa)-\mathcal{I}_{Z}(\thetaa)\right)(C^T)^{-1}$ with $v_\emph{max}$ the corresponding eigenvector, where $CC^T$ is the Cholesky decomposition of the matrix $\mathcal{I}_{Z|X}(\thetaa)-\mathcal{I}_{Z}(\thetaa)$, and $v_\emph{max}$ is the eigenvector for $\lambda_\emph{max}$. The infimum is attained at $\Dthe=(C^T)^{-1}v_\emph{max}$.
\end{lemma}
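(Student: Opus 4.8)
The plan is to feed the parameter-induced relative perturbation $r(z|x) = \Dthe^{T}\frac{\partial \logg p_\thetaa(z|x)}{\partial\thetaa}$ (with $\e$ absorbed into the magnitude of $\Dthe$) into the closed form of $\G[r(z|x);p(z|x)]$ from Lemma~\ref{lemma:second_order_variation}, show that each of its three building-block expectations collapses to a quadratic form in $\Dthe$ built from Fisher information matrices, and then evaluate the resulting generalized Rayleigh quotient by Cholesky whitening. First, the score identity $\E_{z\sim p_\thetaa(z|x)}\!\left[\frac{\partial \logg p_\thetaa(z|x)}{\partial\thetaa}\right]=\partialthe\!\int dz\, p_\thetaa(z|x)=0$ shows $\E_{z\sim p(z|x)}[r(z|x)]=0$, so $r(z|x)\in\QQ$ (the boundedness clause of $\QQ$ is a technicality handled by assuming the score bounded, or by truncation). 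Restricting the infimum in Eq.~(\ref{eq:G}) from all of $\QQ$ to this parameter-induced family is precisely what converts $G[p(z|x)]$ into $G_\Theta[p_\thetaa(z|x)]$, so the equivalence ``$\forall\Dthe\in\Theta,\ \delta^2\IBB_\beta\ge0 \iff \beta\le G_\Theta$'' is immediate from Lemma~\ref{lemma:second_order_variation} once the closed form of $\G$ on this family is in hand.

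Next I reduce the three expectations. The numerator's first term is $\E_{x,z\sim p(x,z)}[r^2(z|x)]=\Dthe^{T}\mathcal{I}_{Z|X}(\thetaa)\Dthe$ by definition. For the shared term, differentiating $p_\thetaa(z)=\int dx\, p(x)p_\thetaa(z|x)$ and dividing by $p_\thetaa(z)$ gives $\frac{\partial\logg p_\thetaa(z)}{\partial\thetaa}=\E_{x\sim p(x|z)}\!\left[\frac{\partial\logg p_\thetaa(z|x)}{\partial\thetaa}\right]$, hence $\E_{x\sim p(x|z)}[r(z|x)]=\Dthe^{T}\frac{\partial\logg p_\thetaa(z)}{\partial\thetaa}$ and $\E_{z\sim p(z)}\!\left[\left(\E_{x\sim p(x|z)}[r]\right)^2\right]=\Dthe^{T}\mathcal{I}_{Z}(\thetaa)\Dthe$. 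The key identity is the $Y$-conditioned one: using the Markov chain $Z-X-Y$ (so $p(z|x,y)=p(z|x)$) and the fact that only the encoder carries $\thetaa$-dependence while $p(x|y)$ does not, differentiate $p_\thetaa(z|y)=\int dx\, p(x|y)p_\thetaa(z|x)$ to get $\frac{\partial\logg p_\thetaa(z|y)}{\partial\thetaa}=\int dx\, \frac{p(x|y)p_\thetaa(z|x)}{p_\thetaa(z|y)}\frac{\partial\logg p_\thetaa(z|x)}{\partial\thetaa}=\E_{x\sim p(x|y,z)}\!\left[\frac{\partial\logg p_\thetaa(z|x)}{\partial\thetaa}\right]$, where $p(x|y,z)=p(x|y)p(z|x)/p(z|y)$ again follows from the Markov structure. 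Thus $\E_{x\sim p(x|y,z)}[r(z|x)]=\Dthe^{T}\frac{\partial\logg p_\thetaa(z|y)}{\partial\thetaa}$ and $\E_{y,z\sim p(y,z)}\!\left[\left(\E_{x\sim p(x|y,z)}[r]\right)^2\right]=\Dthe^{T}\mathcal{I}_{Z|Y}(\thetaa)\Dthe$. Substituting these three facts yields $\G[r(z|x);p(z|x)]=\frac{\Dthe^{T}(\mathcal{I}_{Z|X}-\mathcal{I}_{Z})\Dthe}{\Dthe^{T}(\mathcal{I}_{Z|Y}-\mathcal{I}_{Z})\Dthe}$, i.e. the first equality of Eq.~(\ref{eq_app:G_with_theta}).

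Finally, set $A:=\mathcal{I}_{Z|X}(\thetaa)-\mathcal{I}_{Z}(\thetaa)$ and $B:=\mathcal{I}_{Z|Y}(\thetaa)-\mathcal{I}_{Z}(\thetaa)$; both are positive semidefinite by the law of total variance applied to $r$ given $z$, and to $\E_{x\sim p(x|y,z)}[r]$ given $z$ together with Jensen. Assuming $A\succ0$, write its Cholesky factorization $A=CC^{T}$ and substitute $\Dthe=(C^{T})^{-1}u$: the numerator becomes $u^{T}u$ and the denominator $u^{T}C^{-1}B(C^{T})^{-1}u$, so that $G_\Theta=\inf_{u\neq0}\frac{u^{T}u}{u^{T}\tilde B u}=\lambda_{\max}^{-1}$, where $\tilde B=C^{-1}B(C^{T})^{-1}$ and $\lambda_{\max}$ is its largest eigenvalue with eigenvector $v_{\max}$; the infimum is attained at $u=v_{\max}$, i.e. $\Dthe=(C^{T})^{-1}v_{\max}$, completing the claim.

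\textbf{Main obstacle.} The crux is the third identity in Step~2: correctly pushing $\partialthe$ through $p_\thetaa(z|y)$ and recognizing the induced reweighting as $p(x|y,z)$, which relies on the Markov structure $Z-X-Y$ and on $p(x|y)$ being $\thetaa$-independent; getting the conditioning right here is the only genuinely delicate part. A secondary point requiring care is the degenerate case in which $A$ (or $B$) is merely semidefinite: then the Cholesky step must be run on the non-null subspace of $A$ (or obtained by a limiting argument), and one should record that $G_\Theta=0$ whenever $\ker B\not\subseteq\ker A$, consistent with a phase transition at arbitrarily small $\beta$ along such a direction.
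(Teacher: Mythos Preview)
Your proposal is correct and follows essentially the same route as the paper: identify the parameter-induced relative perturbation with the score, reduce the three expectations in $\G$ to Fisher quadratic forms via the score-averaging identities $\partial_\thetaa\log p_\thetaa(z)=\E_{x\sim p(x|z)}[\partial_\thetaa\log p_\thetaa(z|x)]$ and $\partial_\thetaa\log p_\thetaa(z|y)=\E_{x\sim p(x|y,z)}[\partial_\thetaa\log p_\thetaa(z|x)]$, and finish with the generalized Rayleigh quotient via Cholesky. The only substantive difference is packaging: the paper re-expands $\IB_\beta[p_{\thetaa+\Dthe}]$ from scratch using Lemma~\ref{lemma:expand_to_order_n} and explicitly checks that the $\norm{\Dthe}^2$ contribution coming from the curvature of the parametrization (the $\frac{1}{p}\frac{\partial^2 p_\thetaa}{\partial\thetaa^2}$ term feeding into the \emph{first} variation) drops out, whereas you shortcut by plugging the leading-order $r=\Dthe^T\partial_\thetaa\log p_\thetaa(z|x)$ directly into Lemma~\ref{lemma:second_order_variation}. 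If $\delta^2\IB_\beta[p_\thetaa]$ is to be read as the full second-order-in-$\Dthe$ term of $\IB_\beta[p_{\thetaa+\Dthe}]-\IB_\beta[p_\thetaa]$ (as the paper reads it), you should add one line noting that this curvature contribution vanishes; otherwise your argument is complete as written.

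One small slip in your secondary obstacle: Jensen gives $\Dthe^T A\,\Dthe\ge\Dthe^T B\,\Dthe$ for every $\Dthe$, so $\ker A\subseteq\ker B$ automatically and hence $G_\Theta\ge 1$ always; the ``$G_\Theta=0$'' scenario cannot arise, and the kernel inclusion you wrote is in any case the wrong way around.
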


The proof is in appendix \ref{app:G_theta}. We see that for parameterized encoders $p_\thetaa(z|x)$, each term of $G[p(z|x)]$ in Eq. (\ref{eq:G}) can be replaced by a bilinear form with the Fisher information matrix of the respective variables. 
Although this lemma is not required to understand the more general setting of Lemma~\ref{lemma:second_order_variation}, where the model is described in a functional space, Lemma~\ref{lemma:G_theta} helps understand $G[p(z|x)]$ for parameterized models, which permits directly linking the phase transitions to the model's parameters.

\paragraph{Phase Transitions.}
Now we introduce Theorem \ref{thm:phase_transition} that gives a concrete and practical condition for IB phase transitions, which is the core result of the paper:

\begin{theorem}
\label{thm:phase_transition}
The IB phase transition points $\{\beta_i^c\}$ as defined in Definition \ref{definition:phase_transition_0}  are given by the roots of the following equation:
\begin{equation}
\label{eq:phase_transition_root}
G[\pstar]=\beta
\end{equation}
where $G[p(z|x)]$ is given by Eq. (\ref{eq:G}) and $\pstar$ is the optimal solution of $\IB_\beta[p(z|x)]$ at $\beta$.

\end{theorem}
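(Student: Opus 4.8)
The plan is to read the theorem off from Lemma~\ref{lemma:second_order_variation} once the two clauses of Definition~\ref{definition:phase_transition_0} are unpacked, using that the second variation is \emph{affine and monotone} in $\beta$ along every fixed perturbation direction. Write $\mathcal{A}[r;p]$ and $\mathcal{B}[r;p]$ for the numerator and denominator of $\G[r;p]$ in Eq.~(\ref{eq:G}), so that (as established in the proof of Lemma~\ref{lemma:second_order_variation}) the second variation of $\IB_\beta$ at the encoder $p(z|x)$ along $r\in\QQ$ equals $\tfrac12\epsilon^2\bigl(\mathcal{A}[r;p]-\beta\,\mathcal{B}[r;p]\bigr)$. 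I will first record two elementary facts. (i) Both numerator and denominator are expected conditional variances, $\mathcal{A}[r;p]=\E_{z}\bigl[\Var_{x\sim p(x|z)}(r)\bigr]$ and $\mathcal{B}[r;p]=\E_{z}\bigl[\Var_{y\sim p(y|z)}(\E_{x\sim p(x|y,z)}[r])\bigr]$, hence $\ge 0$ for every $r$. (ii) $\mathcal{A}$ and $\mathcal{B}$ are homogeneous of degree two in $r$ and $\QQ$ is closed under scalar multiples; rescaling any $r$ with $\mathcal{B}[r;p]>0$ to have $\mathcal{B}[r;p]=1$ therefore gives $\inf_{r:\mathcal{B}[r;p]=1}\mathcal{A}[r;p]=\inf_{r:\mathcal{B}[r;p]>0}\G[r;p]=G[p]$.

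\textbf{Clause (1).}
Applying Lemma~\ref{lemma:second_order_variation} at the particular encoder $p(z|x)=\pstar$ turns ``$\forall r\in\QQ,\ \delta^2\IB_\beta[p(z|x)]\big|_{\pstar}\ge 0$'' into exactly ``$\beta\le G[\pstar]$''. Perturbations with $\mathcal{B}[r;\pstar]=0$ are harmless here: by fact~(i) they contribute $\tfrac12\epsilon^2\mathcal{A}[r;\pstar]\ge 0$ and do not enter the infimum defining $G[\pstar]$.

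\textbf{Clause (2) and conclusion.}
Fix the candidate $\beta$ and keep the base point at the $\beta$-optimal encoder $\pstar$; note Clause~(2) perturbs $\IB_{\beta'}$ but does not move the base point, so no regularity in $\beta$ of the optimal encoder is needed. By fact~(ii), the scale-invariant minimal second variation at $\pstar$ for the functional $\IB_{\beta'}$ — the infimum over $r$ with $\mathcal{B}[r;\pstar]=1$ — equals $\tfrac12\epsilon^2\bigl(\inf_{r:\mathcal{B}[r;\pstar]=1}\mathcal{A}[r;\pstar]-\beta'\bigr)=\tfrac12\epsilon^2\bigl(G[\pstar]-\beta'\bigr)$, which is affine, continuous, and strictly decreasing in $\beta'$. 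Hence its right-hand limit as $\beta'\to\beta^+$ is $\tfrac12\epsilon^2\bigl(G[\pstar]-\beta\bigr)$, and this limit is ``$0^-$'' — equals $0$ and is approached through strictly negative values for $\beta'>\beta$ — if and only if $G[\pstar]=\beta$. Thus Clause~(1) is equivalent to $\beta\le G[\pstar]$ and Clause~(2) is equivalent to $G[\pstar]=\beta$, so $\beta$ satisfies both clauses exactly when $G[\pstar]=\beta$; therefore $\{\beta_i^c\}$ is precisely the solution set of $G[\pstar]=\beta$.

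\textbf{Main obstacle.}
The delicate point is the reading of $\inf_{r\in\QQ}$ in Clause~(2): since the second variation is homogeneous of degree two in the perturbation, the \emph{raw} infimum over all of $\QQ$ diverges to $-\infty$ as soon as a descent direction exists (and is exactly $0$ otherwise), so the object that makes ``$=0^-$'' meaningful is the scale-invariant minimal curvature, which fact~(ii) identifies with $\tfrac12\epsilon^2(G[\pstar]-\beta')$. Pinning down this identification is where the care lies; pleasantly, it needs only that $G[\pstar]$ is a finite real number — equivalently, that some $r$ has $\mathcal{B}[r;\pstar]>0$ (otherwise no finite transition occurs) — and no attainment of the infimum defining $G[\pstar]$. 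Everything else is bookkeeping on top of Lemma~\ref{lemma:second_order_variation}.
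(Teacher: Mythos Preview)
Your argument is correct and follows the same overall route as the paper: both reduce Definition~\ref{definition:phase_transition_0} to Lemma~\ref{lemma:second_order_variation}, write the second variation at the fixed base point $\pstar$ as $\tfrac{\epsilon^2}{2}\bigl(\mathcal{A}[r]-\beta'\mathcal{B}[r]\bigr)$, and then pass $\beta'\to\beta^+$ to force $G[\pstar]=\beta$. The one substantive difference is how the limit in Clause~(2) is controlled. The paper defines $T_\beta(\beta')=\inf_{r\in\QQ}\bigl(\mathcal{A}[r]-\beta'\mathcal{B}[r]\bigr)$ and argues continuity of $T_\beta$ at $\beta'=\beta$ directly from the boundedness of $r$, via $|T_\beta(\beta')-T_\beta(\beta)|\le 2|\beta'-\beta|M^2$. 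You instead normalize to $\mathcal{B}[r]=1$ first, so that the infimum becomes the explicit affine function $\tfrac{\epsilon^2}{2}\bigl(G[\pstar]-\beta'\bigr)$ and continuity is immediate. Your normalization is the cleaner device: it sidesteps the issue you correctly flag in your ``main obstacle'' paragraph, namely that the unnormalized infimum over the cone $\QQ$ is $-\infty$ whenever any descent direction exists (since $\QQ$ is closed under scaling and the bound $M$ in the definition of $\QQ$ is $r$-dependent, not uniform). The paper's continuity estimate implicitly treats $M$ as fixed and the minimizing $r$ as common to $T_\beta(\beta')$ and $T_\beta(\beta)$, so your version is in fact the more careful one; otherwise the two proofs are the same.
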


We can understand Eq. (\ref{eq:phase_transition_root}) as the condition when $\delta^2\IB_\beta[p(z|x)]$ is \emph{about} to be able to be negative at the optimal solution $\pstar$ for a given $\beta$.  The proof for Theorem \ref{thm:phase_transition} is given in Appendix \ref{app:phase_transition}. In Section \ref{sec:phase_understanding}, we will analyze Theorem \ref{thm:phase_transition} in detail.

\section{Understanding the formula for IB phase transitions}
\label{sec:phase_understanding}

In this section we set out to understand $G[p(z|x)]$ as given by Eq. (\ref{eq:G}) and the phase transition condition as given by Theorem \ref{thm:phase_transition}, from the perspectives of Jensen's inequality and representational maximum correlation.

\subsection{Jensen's Inequality}
\label{sec:Jensen}
The condition for IB phase transitions given by Theorem \ref{thm:phase_transition} involves $G[p(z|x)]=\inf\limits_{r(z|x)\in\QQ}\G[r(z|x);p(z|x)]$ which is in itself an optimization problem.
We can understand $G[p(z|x)]=\inf\limits_{r(z|x)\in\QQ}\frac{A-C}{B-C}$ in Eq. (\ref{eq:G}) using Jensen's inequality:

\beq{eq:Jensens}
\underbrace{\E_{x,z\sim p(x,z)}[r^2(z|x)]}_A\ge \underbrace{\E_{y,z\sim p(y,z)}\left[\left(\E_{x\sim p(x|y,z)}[r(z|x)]\right)^2\right]}_B\ge\underbrace{\E_{z\sim p(z)}\left[\left(\E_{x\sim p(x|z)}[r(z|x)]\right)^2\right]}_C
\eeq

The equality between $A$ and $B$ holds when the perturbation $r(z|x)$ is constant w.r.t. $x$ for any $z$; the equality between $B$ and $C$ holds when $\E_{x\sim p(x|y,z)}[r(z|x)]$ is constant w.r.t. $y$ for any $z$. 
Therefore, the minimization of $\frac{A-C}{B-C}$ encourages the relative perturbation function $r(z|x)$ to be as constant w.r.t. $x$ as possible (minimizing intra-class difference), but as different w.r.t. different $y$ as possible (maximizing inter-class difference), resulting in a \emph{clustering} of the values of $r(z|x)$ for different examples $x$ according to their class $y$. Because of this clustering property in classification problems, we conjecture that there are at most $|\Y|-1$ phase transitions, where $|\Y|$ is the number of classes, with each phase transition differentiating one or more classes. 

\subsection{Representational Maximum Correlation}
\label{sec:representational_maximum_correlation}

Under certain conditions we can further simplify $G[p(z|x)]$ and gain a deeper understanding of it.
Firstly, inspired by maximum correlation \citep{anantharam2013maximal}, we introduce two new concepts, \emph{representational maximum correlation} and \emph{conditional maximum correlation}, as follows.

\begin{definition}
\label{def:rho_r}
Given a joint distribution $p(X,Y)$, and a representation $Z$ satisfying the Markov chain $Z-X-Y$, the representational maximum correlation $\rho_r(X,Y;Z)$ is defined as
\begin{align}
\rho_r(X,Y;Z)&:=\sup_{\left(f(x,z),g(y,z)\right)\in\S_1}\E_{x,y,z\sim p(x,y,z)}[f(x,z)g(y,z)]
\end{align}

where $\S_1=\{(f:\X\times\Z\to\R, g:\Y\times\Z\to\R)\,\big|\,f, g\text{ bounded}, \text{and } \E_{x\sim p(x|z)}[f(x,z)]=\E_{y\sim p(y|z)}[g(y,z)]=0,\ \E_{x,z\sim p(x,z)}[f^2(x,z)]=\E_{y,z\sim p(y,z)}[g^2(y,z)]=1\}$.

The conditional maximum correlation $\rho_m(X,Y|Z)$ is defined as:
\begin{equation}
\rho_m(X,Y|Z):= \sup_{\left(f(x),g(y)\right)\in\S_2}\E_{x,y\sim p(x,y|z)}[f(x)g(y)]
\end{equation}
where $\S_2=\{(f:\X\to\R, g:\Y\to\R)\,\big|\,f, g\text{ bounded},\text{ and } \forall z\in\Z: \E_{x\sim p(x|z)}[f(x)]=\E_{y\sim p(y|z)}[g(y)]=0$, $\E_{x\sim p(x|z)}[f^2(x)]=\E_{y\sim p(y|z)}[g^2(y)]=1\}$.
\end{definition}

We prove the following Theorem \ref{thm:rho_r_property}, which expresses $G[p(z|x)]$ in terms of representational maximum correlation and related quantities, with proof given in Appendix \ref{app:representational_maximum_correlation}.

\begin{theorem}
\label{thm:rho_r_property}

Define $\QQA:=\{r(z|x): \mathcal{X}\times\mathcal{Z}\to \mathbb{R}\,\big|\,r \text{ bounded}\}$. If $\QQA$ and $\QQ$ satisfy: $\forall r(z|x)\in\QQA$, there exists\footnote{%
  For discrete $X$, $Z$ such that the cardinality $|\Z|\ge|\X|$, this is generally true since in this scenario, $h(x,z)$ and $s(z)$ have $|\X||\Z|+|\Z|$ unknown variables, but the condition has only $|\X||\Z|+|\X|$ linear equations. 
  The difference between $\QQ$ and $\QQA$ is that $\QQA$ does not have the requirement of $\E_{p(z|x)}[r(z|x)]=0$. Combined with Lemma \ref{lemma:G_invariance}, this condition allows us to replace $r(z|x)\in\QQ$ by $r(z|x)\in\QQA$ in Eq. (\ref{eq:G}).
} $r_1(z|x)\in\QQ$, $s(z)\in\{s(z):\Z\to\R\,|\,s \text{ bounded}\}$ s.t. $r(z|x)=r_1(z|x)+s(z)$, then we have:

\begin{enumerate}[label=(\roman*)]
\item The representation maximum correlation and $G$:
\begin{align}
\label{eq:G_simplified}
G[p(z|x)]=\frac{1}{\rho_r^2(X,Y;Z)}
\end{align}

\item The representational maximum correlation and conditional maximum correlation:
\begin{align}
\rho_r(X,Y;Z)&=\sup_{Z\in\mathcal{Z}}[\rho_m(X,Y|Z)]
\end{align}

\item When $Z$ is continuous, an optimal relative perturbation function $r(z|x)$ for $G[p(z|x)]$ is given by

\begin{equation}
r^*(z|x)=h^*(x)\sqrt{\frac{\delta(z-z^*)}{p(z)}}
\end{equation}
where $z^*=\argmax\limits_{z\in\mathcal{Z}}\rho_m(X,Y|Z=z)$, and $h^*(x)$ is the optimal solution for the learnability threshold function $h^*(x)=\argmin\limits_{h(x)\in\{h:\X\to\R\,|\,h\text{ bounded}\}} \beta_0[h(x)]$ with $p(X,Y|Z=z^*)$ ($\beta_0[h(x)]$ is given in Theorem 4 of \citet{wu2019learnability}).

\item For discrete $X$, $Y$ and $Z$, we have

\beq{eq:rho_r_svd}
\rho_r(X,Y;Z)=\max_{Z\in\Z}\sigma_2(Z)
\eeq

where $\sigma_2(Z)$ is the second largest singular value of the matrix $Q_{X,Y|Z}:=\left(\frac{p(x,y|z)}{\sqrt{p(x|z)p(y|z)}}\right)_{x,y}=\left(\frac{p(x,y)}{\sqrt{p(x)p(y)}}\sqrt{\frac{p(z|x)}{p(z|y)}}\right)_{x,y}$.

\end{enumerate}
\end{theorem}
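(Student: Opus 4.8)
The plan is to prove the four parts of Theorem~\ref{thm:rho_r_property} more or less in the order stated, building each part on the preceding ones. Throughout I will use Lemma~\ref{lemma:second_order_variation}, which gives $G[p(z|x)]=\inf_{r\in\QQ}\G[r;p(z|x)]$, and I will freely use the hypothesis that every $r\in\QQA$ decomposes as $r_1+s$ with $r_1\in\QQ$, $s=s(z)$. The first observation is that $\G[r;p(z|x)]$ is \emph{invariant} under adding a function of $z$ alone: each of the three quantities $A$, $B$, $C$ in Eq.~(\ref{eq:Jensens}) shifts in a way that cancels in $A-C$ and $B-C$ (this is presumably Lemma~\ref{lemma:G_invariance} referenced in the footnote). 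Combined with the decomposition hypothesis, this lets me replace $\inf_{r\in\QQ}$ by $\inf_{r\in\QQA}$, i.e. drop the zero-mean constraint. That is the key normalization step that connects $G$ to a ratio of unconstrained quadratic forms.

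For part (i), the plan is to recognize the numerator $A-C$ and denominator $B-C$ of $\G$ as conditional variances. Write $\tilde r(x,z)=r(z|x)-\E_{x\sim p(x|z)}[r(z|x)]$, so that $A-C=\E_{x,z}[\tilde r^2]$ and, since $\E_{x\sim p(x|y,z)}[\tilde r]=\E_{x\sim p(x|y,z)}[r]-\E_{x\sim p(x|z)}[r]$, the denominator $B-C=\E_{y,z}\big[(\E_{x\sim p(x|y,z)}[\tilde r])^2\big]$. Now fix the normalization $A-C=\E_{x,z}[\tilde r^2]=1$; then $G[p(z|x)]^{-1}=\sup$ of the denominator over such $\tilde r$. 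The denominator is $\E_{y,z}\big[(\E_{x}[\tilde r \mid y,z])^2\big]$, which by the variational characterization of conditional expectation equals $\sup_{g} \big(\E_{x,y,z}[\tilde r(x,z)\, g(y,z)]\big)^2$ over $g(y,z)$ with $\E_{y\sim p(y|z)}[g]=0$ (for each $z$) and $\E_{y,z}[g^2]=1$ — this is just the fact that projecting onto the space of functions of $(y,z)$ and taking its squared norm is a supremum of squared inner products. Also $\E_{x\sim p(x|z)}[\tilde r]=0$ automatically. Taking the supremum over $\tilde r$ as well turns $G^{-1}$ into exactly the definition of $\rho_r^2(X,Y;Z)$ in Definition~\ref{def:rho_r}, giving Eq.~(\ref{eq:G_simplified}). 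I expect the main subtlety here to be bookkeeping the per-$z$ zero-mean constraints and confirming that the supremum over the bilinear form is attained/approached within the admissible class $\S_1$ — essentially a Cauchy--Schwarz / Hilbert-space projection argument done carefully with the conditional measures.

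For part (ii), the idea is that in $\rho_r(X,Y;Z)=\sup_{(f,g)\in\S_1}\E_{x,y,z}[f(x,z)g(y,z)]$ one can optimize $z$-slice by $z$-slice. Write $\E_{x,y,z}[fg]=\E_{z}\big[\E_{x,y\sim p(x,y|z)}[f(x,z)g(y,z)]\big]$. For fixed $z$, the inner expectation with the slice-wise constraints $\E_{x|z}[f]=\E_{y|z}[g]=0$ and slice-wise second moments is, by definition, bounded by $\rho_m(X,Y|Z=z)$ times the product of the slice norms $\|f(\cdot,z)\|\,\|g(\cdot,z)\|$. A weighting/Cauchy--Schwarz argument over $z$ (put all the ``mass'' of $f$ and $g$ on the maximizing slice $z^*$, i.e. take $f,g$ supported near $z^*$) then shows the sup equals $\sup_z \rho_m(X,Y|Z=z)$. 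The reverse inequality is immediate by plugging slice-supported test functions back in. The continuous-$Z$ case needs the $\delta$-function normalization $\sqrt{\delta(z-z^*)/p(z)}$ to make a single-slice perturbation admissible, which is exactly what part (iii) records; I would state (ii) and (iii) together, deriving the optimal $r^*(z|x)=h^*(x)\sqrt{\delta(z-z^*)/p(z)}$ by combining the slice-reduction with Theorem~4 of \citet{wu2019learnability}, which identifies the optimal in-slice perturbation $h^*(x)$ and the learnability threshold $\beta_0$ for the conditional distribution $p(X,Y|Z=z^*)$ — noting $\rho_m(X,Y|Z=z)^2 = 1/\beta_0[\,\cdot\,;p(X,Y|Z=z)]$ so that $\sup_z\rho_m^2=1/\inf_z\beta_0=G^{-1}$, consistent with (i).

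For part (iv), with $X,Y,Z$ discrete and $Z=z$ fixed, $\rho_m(X,Y|Z=z)$ is the classical Hirschfeld--Gebelein--R\'enyi maximal correlation of the pair $(X,Y)$ under $p(x,y|z)$. The standard fact is that this equals the second largest singular value of the matrix $\big(p(x,y|z)/\sqrt{p(x|z)p(y|z)}\big)_{x,y}$: the largest singular value is $1$ with singular vectors $\sqrt{p(x|z)}$, $\sqrt{p(y|z)}$, and the zero-mean constraints $\E_{x|z}[f]=\E_{y|z}[g]=0$ exactly project out that top component, so the constrained sup is $\sigma_2(Z=z)$. Taking $\max_z$ and invoking part (ii) gives Eq.~(\ref{eq:rho_r_svd}); the alternate expression for the matrix entries follows from Bayes' rule $p(x,y|z)=\frac{p(x,y)p(z|x)}{p(z)}$ together with $p(x|z)=\frac{p(x)p(z|x)}{p(z)}$, $p(y|z)=\frac{p(y)p(z|y)}{p(z)}$ (using the Markov chain $Z-X-Y$ so that $p(z|x,y)=p(z|x)$), which is a routine substitution. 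The main obstacle across the whole proof is part (i): getting the Hilbert-space projection argument airtight — in particular justifying the interchange of the two suprema (over $\tilde r$ and over $g$) and verifying attainability within the bounded function classes — while everything downstream (ii)--(iv) is then a sequence of slice-decomposition and classical-maximal-correlation facts.
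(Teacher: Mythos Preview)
Your proposal is correct and follows essentially the same route as the paper: the paper also uses Lemma~\ref{lemma:G_invariance} plus the decomposition hypothesis to drop the zero-mean constraint, centers so that $\E_{x\sim p(x|z)}[r]=0$ (your $\tilde r$), normalizes the numerator and identifies $G^{-1}$ with a one-sided sup $\rho_s^2$, and then uses Cauchy--Schwarz (your Hilbert-space projection) to show $\rho_s=\rho_r$; parts (ii)--(iv) are likewise handled by the same slice-by-slice supremum (the paper parameterizes the slice mass via $c(z)=p(z)\E_{x|z}[f^2]$, which is exactly your weighting argument) and the standard second-singular-value characterization of maximal correlation. The only cosmetic difference is that the paper introduces the intermediate quantity $\rho_s$ explicitly before equating it with $\rho_r$, whereas you go directly.
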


Theorem \ref{thm:rho_r_property} furthers our understanding of $G[p(z|x)]$ and the phase transition condition (Theorem \ref{thm:phase_transition}), which we elaborate as follows.

\paragraph{Discovering maximum correlation in the orthogonal space of a learned representation:} 
Intuitively, the representational maximum correlation measures the maximum linear correlation between $f(X,Z)$ and $g(Y,Z)$ among all real-valued functions $f,g$, under the constraint that $f(X,Z)$ is ``orthogonal'' to $p(X|Z)$ and $g(Y,Z)$ is ``orthogonal'' to $p(Y|Z)$.
Theorem \ref{thm:rho_r_property} (i) reveals that $G[p(z|x)]$ is the inverse square of this representational maximum correlation.
Theorem \ref{thm:rho_r_property} (ii) further shows that $G[p(z|x)]$ is finding a specific $z^*$ on which maximum (nonlinear) correlation between $X$ and $Y$ conditioned on $Z$ can be found.
Combined with Theorem \ref{thm:phase_transition}, we have that when we continuously increase $\beta$, for the optimal representation $Z^*_\beta$ given by $\pstar$ at $\beta$, $\rho_r(X,Y;Z^*_\beta)$ shall monotonically decrease due to that $X$ and $Y$ has to find their maximum correlation on the orthogonal space of an increasingly better representation $Z^*_\beta$ that captures more information about $X$.
A phase transition occurs when $\rho_r(X,Y;Z^*_\beta)$ reduces to $\frac{1}{\sqrt{\beta}}$, after which as $\beta$ continues to increase, $\rho_r(X,Y;Z^*_\beta)$ will try to find maximum correlation between $X$ and $Y$ orthogonal to the full previously learned representation.
This is reminiscent of canonical-correlation analysis (CCA) \citep{hotelling1992relations} in linear settings, where components with decreasing linear maximum correlation that are orthogonal to previous components are found one by one.
In comparison, we show that in IB, each phase transition corresponds to learning a new \emph{nonlinear} component of maximum correlation between $X$ and $Y$ in $Z$, orthogonal to the previously-learned $Z$. In the case of classification where different classes may have different difficulty (e.g. due to label noise or support overlap), we should expect that classes that are less difficult as measured by a larger maximum correlation between $X$ and $Y$ are  learned earlier.

\paragraph{Conspicuous subset conditioned on a single $z$:}
Furthermore, we show in (iii) that an optimal relative perturbation function $r(z|x)$ can be decomposed into a product of two factors, a $\sqrt{\frac{\delta(z-z^*)}{p(z)}}$ factor that only focus on perturbing a specific point $z^*$ in the representation space, and an $h^*(x)$ factor that is finding the ``conspicuous subset'' \citep{wu2019learnability}, i.e. the most confident, large, typical, and imbalanced subset in the $X$ space for the distribution $(X,Y)\sim p(X,Y|z^*)$.

\paragraph{Singular values} 
In categorical settings, (iv) reveals a connection between $G[p(z|x)]$ and the singular value of the $Q_{X,Y|Z}$ matrix. Due to the property of SVD, we know that the square of the singular values of $Q_{X,Y|Z}$ equals the non-negative eigenvalue of the matrix $Q^T_{X,Y|Z}Q_{X,Y|Z}$. Then the phase transition condition in Theorem \ref{thm:phase_transition} is equivalent to a (nonlinear) eigenvalue problem. This is resonant with previous analogy with CCA in linear settings, and is also reminiscent of the linear eigenvalue problem in Gaussian IB \citep{chechik2005information}.

\section{Algorithm for phase transitions discovery in classification}
\label{sec:alg_for_discovering_transition}

As a consequence of the theoretical analysis above, we are able to derive an algorithm to efficiently estimate the phase transitions for a given model architecture and dataset.
This algorithm also permits us to empirically confirm some of our theoretical results in Section~\ref{sec:phase_experiments}.

Typically, classification involves high-dimensional inputs $X$.
Without sweeping the full range of $\beta$ where at each $\beta$ it is a full learning problem, it is in general a difficult task to estimate the phase transitions.
In Algorithm~\ref{alg:coordinate_descent}, we present a two-stage approach.

In the first stage, we train a single maximum likelihood neural network $f_\thetaa$ with the same encoder architecture as in the (variational) IB to estimate $p(y|x)$, and obtain an $N\times C$ matrix $p(y|x)$, where $N$ is the number of examples in the dataset and $C$ is the number of classes.
In the second stage, we perform an iterative algorithm w.r.t. $G$ and $\beta$, alternatively, to converge to a phase transition point.

Specifically, for a given $\beta$, we use a Blahut-Arimoto type IB algorithm \citep{tishby2000information} to efficiently reach IB optimal $\pstar$ at $\beta$, then use SVD (with the formula given in Theorem \ref{thm:rho_r_property} (iv)) to efficiently estimate $G[\pstar]$ at $\beta$ (step 8).
We then use the $G[\pstar]$ value as the new $\beta$ and do it again (step 7 in the next iteration).
At convergence, we will reach the phase transition point given by $G[\pstar]=\beta$ (Theorem \ref{thm:phase_transition}).
After convergence as measured by patience parameter $K$, we slightly increase $\beta$ by $\delta$ (step 13), so that the algorithm can discover the subsequent phase transitions.

\begin{algorithm}[t]
   \caption{\textbf{Phase transitions discovery for IB}}
\label{alg:coordinate_descent}
\begin{algorithmic}
\STATE {\bfseries Require} $(X, Y)$: the dataset 
\STATE {\bfseries Require} $f_\thetaa$: a neural net with the same encoder architecture as the (variational) IB
\STATE {\bfseries Require} $K$: patience
\STATE {\bfseries Require} $\delta$: precision floor
\STATE {\bfseries Require} $R$: maximum ratio between $\betath$ and $\beta$.
\STATE // \textit{First stage: fit $p(y|x)$ using neural net $f_
\thetaa$:}
\STATE 1: $p(y|x)\gets$ fitting $(X,Y)$ using $f_\thetaa$ via maximum likelihood.
\STATE 2: $p(x)\gets \frac{1}{N}$
\STATE
\STATE // \textit{Second stage: coordinate descent using $G[p(z|x)]$ and IB algorithm:}
\STATE 3: $\beta_0^c\gets\betath(1)$
\STATE 4: $\mathbb{B}\gets\{\beta_0^c\}$   \ \ \  //\textit{$\mathbb{B}$ is a set collecting the phase transition points}
\STATE 5: $(\betanew,\beta, \text{count})\gets(\beta_0^c,1,0)$
\STATE 6: \textbf{while} $\frac{\betanew}{\beta}<R$ \textbf{do}:
\STATE 7: \ \ \ \ \ \ \ \ $\beta\gets\betanew$
\STATE 8: \ \ \ \ \ \ \ \ $\betanew\gets\betath(\beta)$
\STATE 9: \ \ \ \ \ \ \ \ \textbf{if} $\betanew-\beta<\delta$ \textbf{do}:
\STATE 10: \ \ \ \ \ \ \ \ \ \ \ \ \ \ $\text{count}\gets\text{count}+1$
\STATE 11: \ \ \ \ \ \ \ \ \ \ \ \ \ \ \textbf{if} $\text{count}>K$ \textbf{do}:
\STATE 12: \ \ \ \ \ \ \ \ \ \ \ \ \ \ \ \ \ \ \ \  $\mathbb{B}\gets \mathbb{B}\cup\{\betanew\}$
\STATE 13: \ \ \ \ \ \ \ \ \ \ \ \ \ \ \ \ \ \ \ \  $\betanew\gets\betanew+\delta$
\STATE 14: \ \ \ \ \ \ \ \ \ \ \ \ \ \ \textbf{end if} 
\STATE 15: \ \ \ \ \ \ \textbf{else}: $\text{count}\gets 0$ 
\STATE 16: \ \ \ \ \ \ \textbf{end if}
\STATE 17: \textbf{end while}
\STATE 18: \textbf{return} $\mathbb{B}$

\STATE
\STATE \textbf{subroutine} $\betath(\beta)$:
\STATE s1: Compute $\pstar$ using the IB algorithm \citep{tishby2000information}.
\STATE s2: $\betanew\gets G[\pstar]$  using SVD (Eq. \ref{eq:G_simplified} and \ref{eq:rho_r_svd}).
\STATE s3: \textbf{return} $\betanew$
\end{algorithmic}
\end{algorithm}

\section{Empirical study}
\label{sec:phase_experiments}
We quantitatively and qualitatively test the ability of our theory and Algorithm \ref{alg:coordinate_descent} to provide good predictions for IB phase transitions.
We first verify them in fully categorical settings, where $X,Y,Z$ are all discrete, and we show that the phase transitions can correspond to learning new classes as we increase $\beta$.
We then test our algorithm on versions of the MNIST and CIFAR10 datasets with added label noise.

\subsection{Categorical dataset}
For categorical datasets, $X$ and $Y$ are discrete, and $p(X)$ and $p(Y|X)$ are given.
To test Theorem \ref{thm:phase_transition}, we use the Blahut-Arimoto IB algorithm to compute the optimal $\pstar$ for each $\beta$.
$I(Y;Z^*)$ vs. $\beta$ is plotted in Fig. \ref{fig:beta_th_vs_beta} (a).
There are two phase transitions at $\beta_0^c$ and $\beta_1^c$.
For each $\beta$ and the corresponding $\pstar$, we use the SVD formula (Theorem \ref{thm:rho_r_property}) to compute $G[\pstar]$, shown in Fig. \ref{fig:beta_th_vs_beta} (b).
We see that $G[\pstar]=\beta$ at exactly the observed phase transition points $\beta_0^c$ and $\beta_1^c$. Moreover, starting at $\beta=1$, Alg. 1 converges to each phase transition points within few iterations.
Our other experiments with random categorical datasets show similarly tight matches.

Furthermore, in Appendix \ref{app:subset_separation} we show that the phase transitions correspond to the onset of separation of $p(z|x)$ for subsets of $X$ that correspond to different classes.
This supports our conjecture from Section \ref{sec:Jensen} that there are at most $|\Y|-1$ phase transitions in classification problems.

\begin{figure}[t]
\begin{center}
\begin{subfigure}[b]{0.34\textwidth}
\centering
\includegraphics[width=\textwidth]{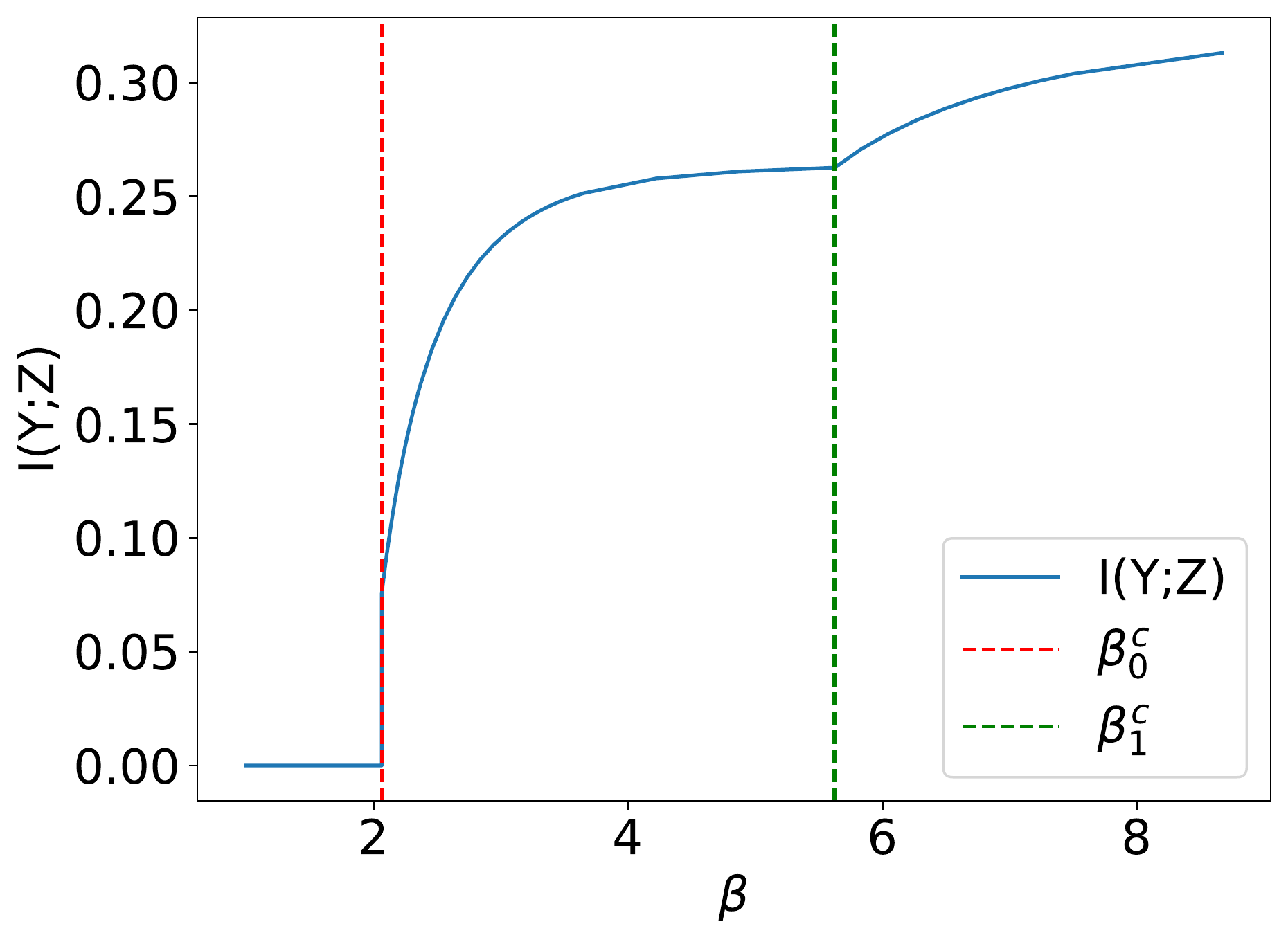}
\caption{}
\end{subfigure}
\begin{subfigure}[b]{0.37\textwidth}
\centering
\includegraphics[width=\textwidth]{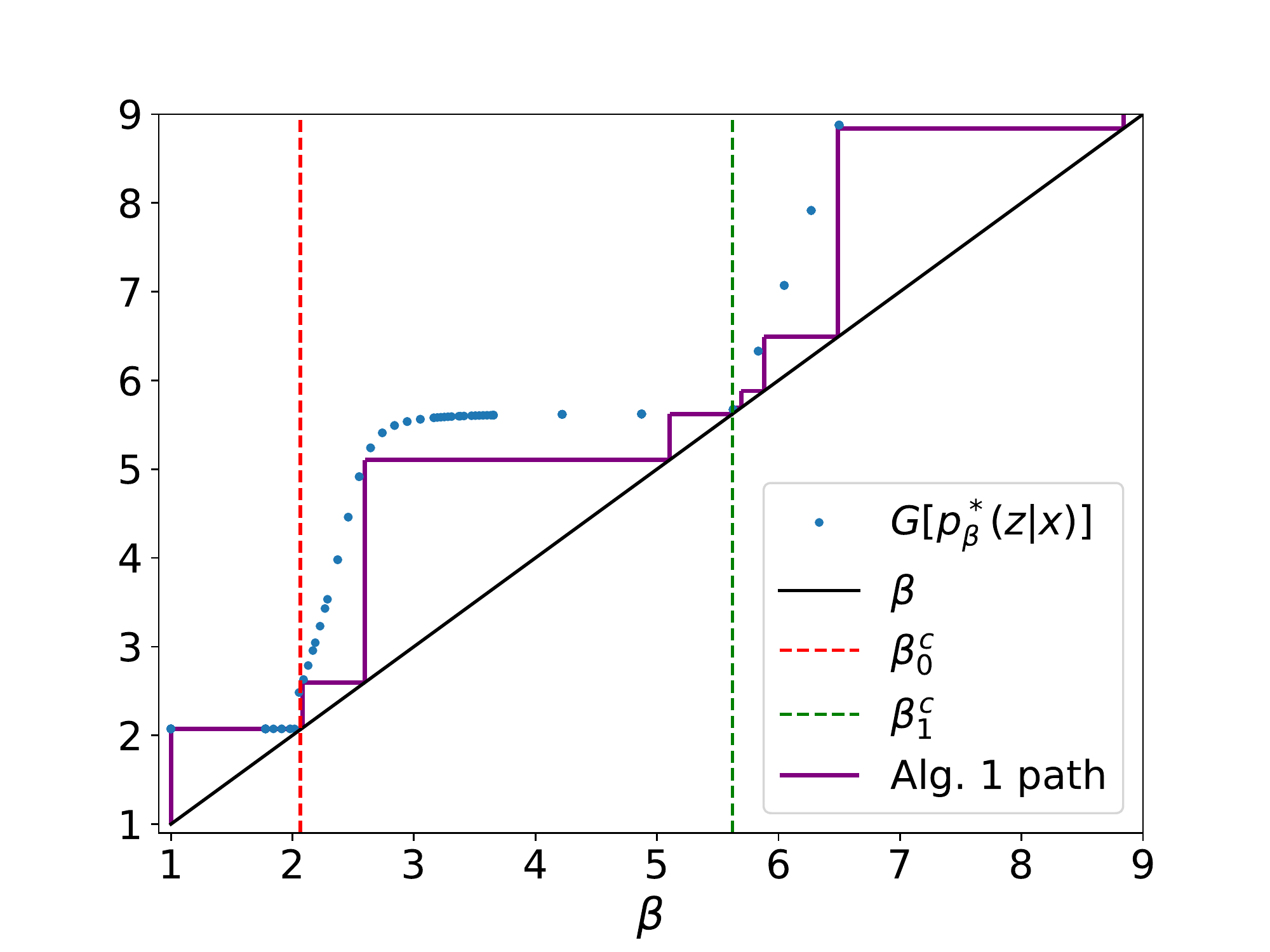}
\caption{}
\end{subfigure}
\end{center}
\caption{(a) $I(Y;Z^*)$ vs. $\beta$ for a categorical dataset with $|X|=|Y|=|Z|=3$, where $Z^*$ is given by $\pstar$, and the vertical lines are the experimentally discovered phase transition points $\beta_0^c$ and $\beta_1^c$. (b) $G[\pstar]$ vs. $\beta$ for the same dataset, and the path for Alg. \ref{alg:coordinate_descent}, with $\beta_0^c$ and $\beta_1^c$ in (a) also plotted. The dataset is given in Fig. \ref{fig:py_x}.}
\label{fig:beta_th_vs_beta}
\end{figure}

\subsection{MNIST dataset}
For continuous $X$, how does our algorithm perform, and will it reveal aspects of the dataset?
We first test our algorithm in a 4-class MNIST with noisy labels\footnote{%
  We use 4 classes since it is simpler than the full 10 classes, but still potentially possesses phase transitions. We use noisy label to mimic realistic settings where the data may be noisy and also to have controllable difficulty for different classes.
}, whose confusion matrix and experimental settings are given in Appendix \ref{app:MNIST_exp}.
Fig. \ref{fig:phase_mnist} (a) shows the path Alg. \ref{alg:coordinate_descent} takes.
We see again that in each phase Alg. \ref{alg:coordinate_descent} converges to the phase transition points within a few iterations, and it discovers in total 3 phase transition points.
Similar to the categorical case, we expect that each phase transition corresponds to the onset of learning a new class, and that the last class is much harder to learn due to a larger separation of $\beta$. Therefore, this class should have a much larger label noise so that it is hard to capture this component of maximum correlation between $X$ and $Y$, as analyzed in representational maximum correlation (Section \ref{sec:representational_maximum_correlation}).
Fig. \ref{fig:phase_mnist} (b) plots the per-class accuracy with increasing $\beta$ for running the Conditional Entropy Bottleneck \citep{fischer2018the} (another variational bound on IB).
We see that the first two predicted phase transition points $\beta_0^c$, $\beta_1^c$ closely match the observed onset of learning class 3 and class 0.
Class 1 is observed to learn earlier than expected, possibly due to the gap between the variational IB objective and the true IB objective in continuous settings. 
By looking at the confusion matrix for the label noise (Fig. \ref{fig:py_x_4}),
we see that the ordering of onset of learning: class 2, 3, 0, 1, corresponds exactly to the decreasing diagonal element $p(\tilde{y}=1|y=1)$ (increasing noise) of the classes, and as predicted, class 1 has a much smaller diagonal element $p(\tilde{y}=1|y=1)$ than the other three classes, which makes it much more difficult to learn.
This ordering of classes by difficulty is what our representational maximum correlation predicts.

\begin{figure}
    \centering
    \begin{subfigure}{.36\columnwidth}
    \includegraphics[scale=0.22]{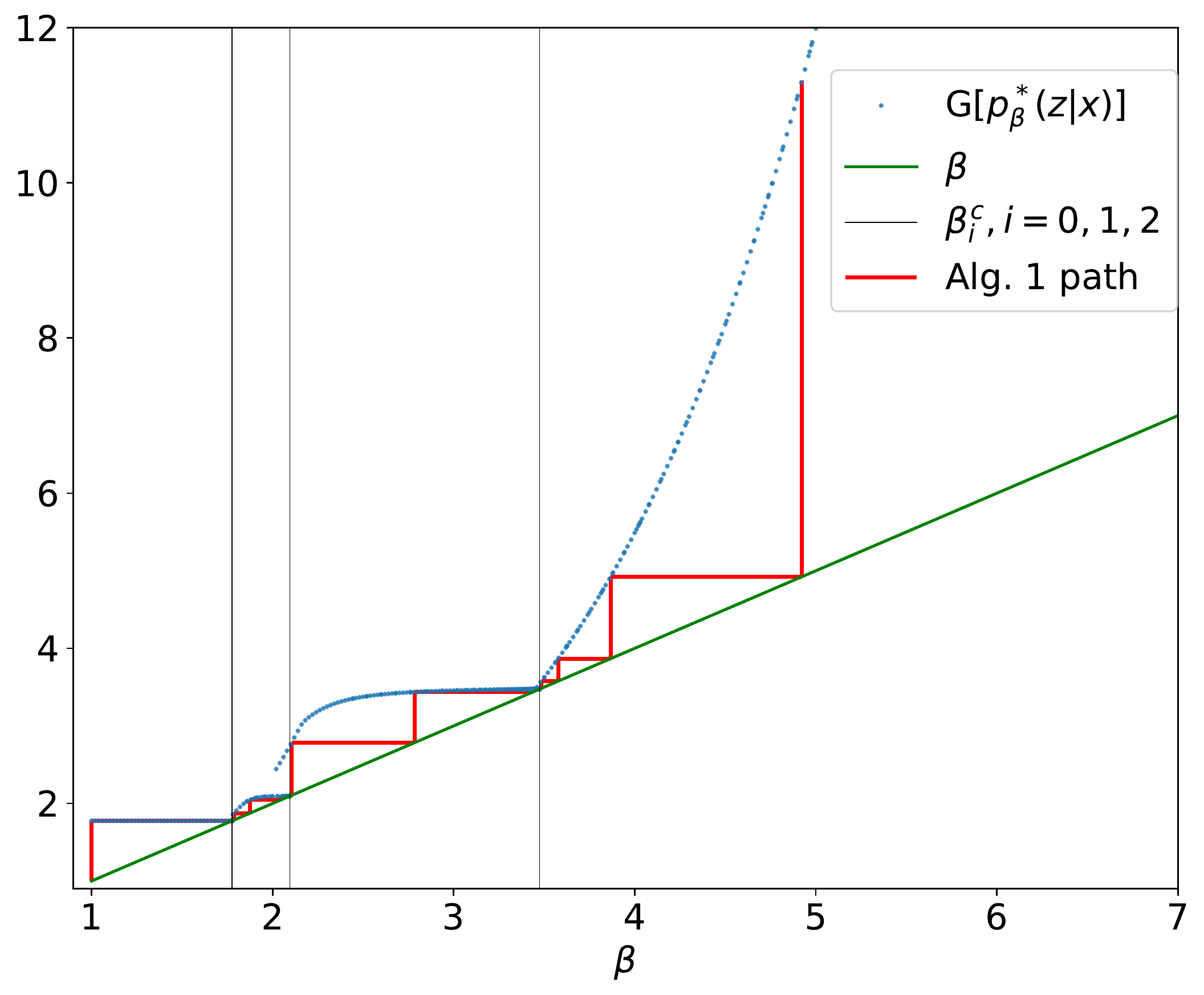}
    \caption{}
    \end{subfigure}
    \begin{subfigure}{.42\columnwidth}
    \includegraphics[scale=0.22]{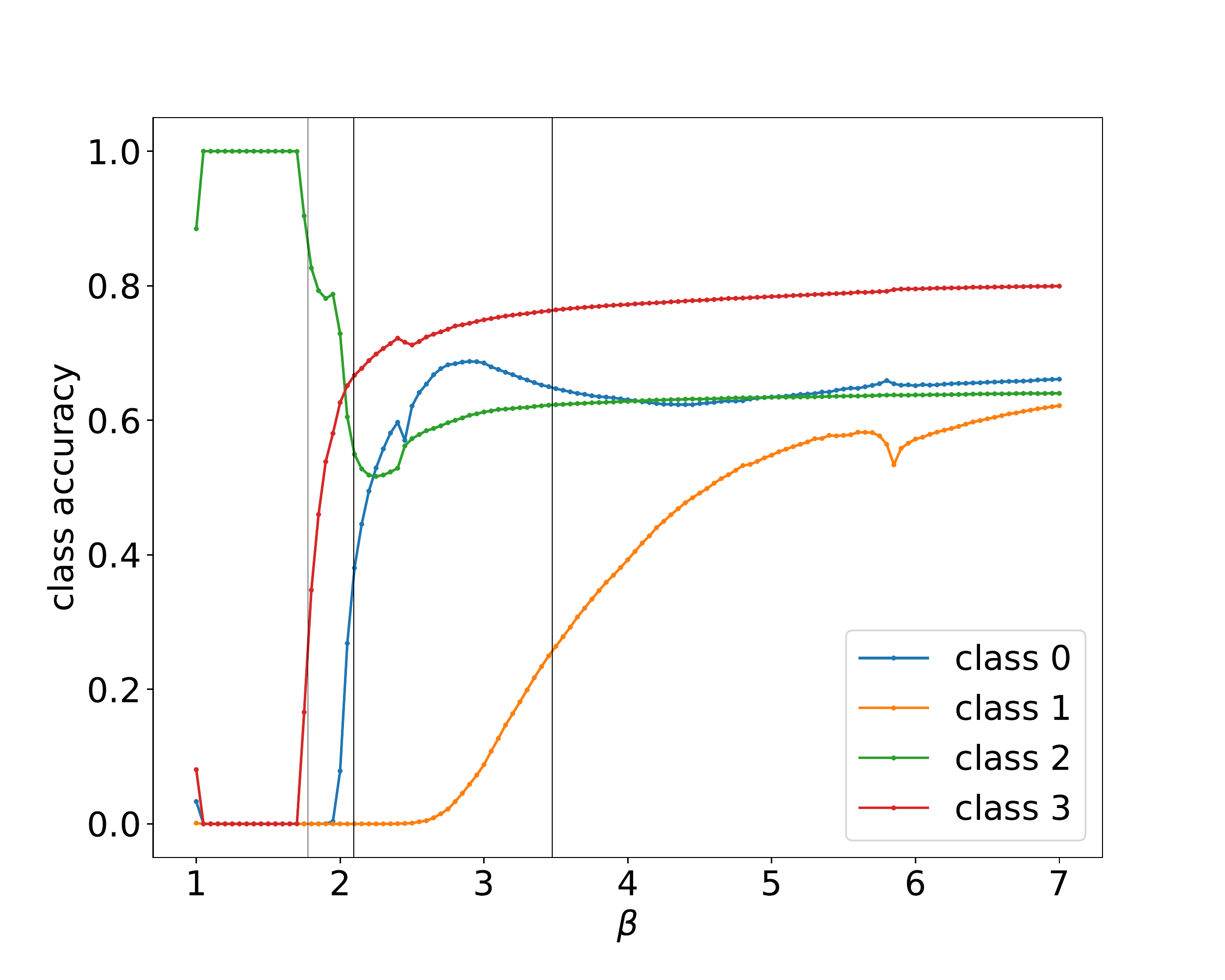}
    \caption{}
    \end{subfigure}
    \caption{%
    (a) Path of Alg. \ref{alg:coordinate_descent} starting with $\beta=1$, where the maximum likelihood model $f_\thetaa$ is using the same encoder architecture as in the CEB model. 
    This stairstep path shows that Alg. \ref{alg:coordinate_descent} is able to ignore very large regions of $\beta$, while quickly and precisely finding the phase transition points.
    Also plotted is an accumulation of $G[\pstar]$ vs. $\beta$ by running Alg. 1 with varying starting $\beta$ (blue dots).
    (b) Per-class accuracy vs. $\beta$, where the accuracy at each $\beta$ is from training an independent CEB model on the dataset.
    The per-class accuracy denotes the fraction of correctly predicted labels by the CEB model for the observed label $\tilde{y}$.
    }%
    \label{fig:phase_mnist}%
\end{figure}

\subsection{CIFAR10 dataset}

Finally, we investigate the CIFAR10 experiment from Section \ref{sec:introduction}.
The details of the experimental setup are described in Appendix \ref{app:cifar_details}.
This experiment stretches the current limits of our discrete approximation to the underlying continuous representation being learned by the models.
Nevertheless, we can see in Fig. \ref{fig:phase_cifar_exp} that many of the visible empirical phase transitions are tightly identified by Alg. \ref{alg:coordinate_descent}. Particularly, the onset of learning is predicted quite accurately; the large interval between the predicted $\beta_3=1.21$ and $\beta_4=1.61$ corresponds well to the continuous increase of $I(X;Z)$ and $I(Y;Z)$ at the same interval. And Alg. 1 is able to identify many dense transitions not obviously seen by just looking at $I(Y;Z)$ vs. $\beta$ curve alone.
Alg. 1 predicts 9 phase transitions, exactly equal to $|\Y|-1$ for CIFAR10.

\begin{figure}
\centering
\begin{subfigure}{.32\columnwidth}
\begin{center}
\includegraphics[scale=0.19]{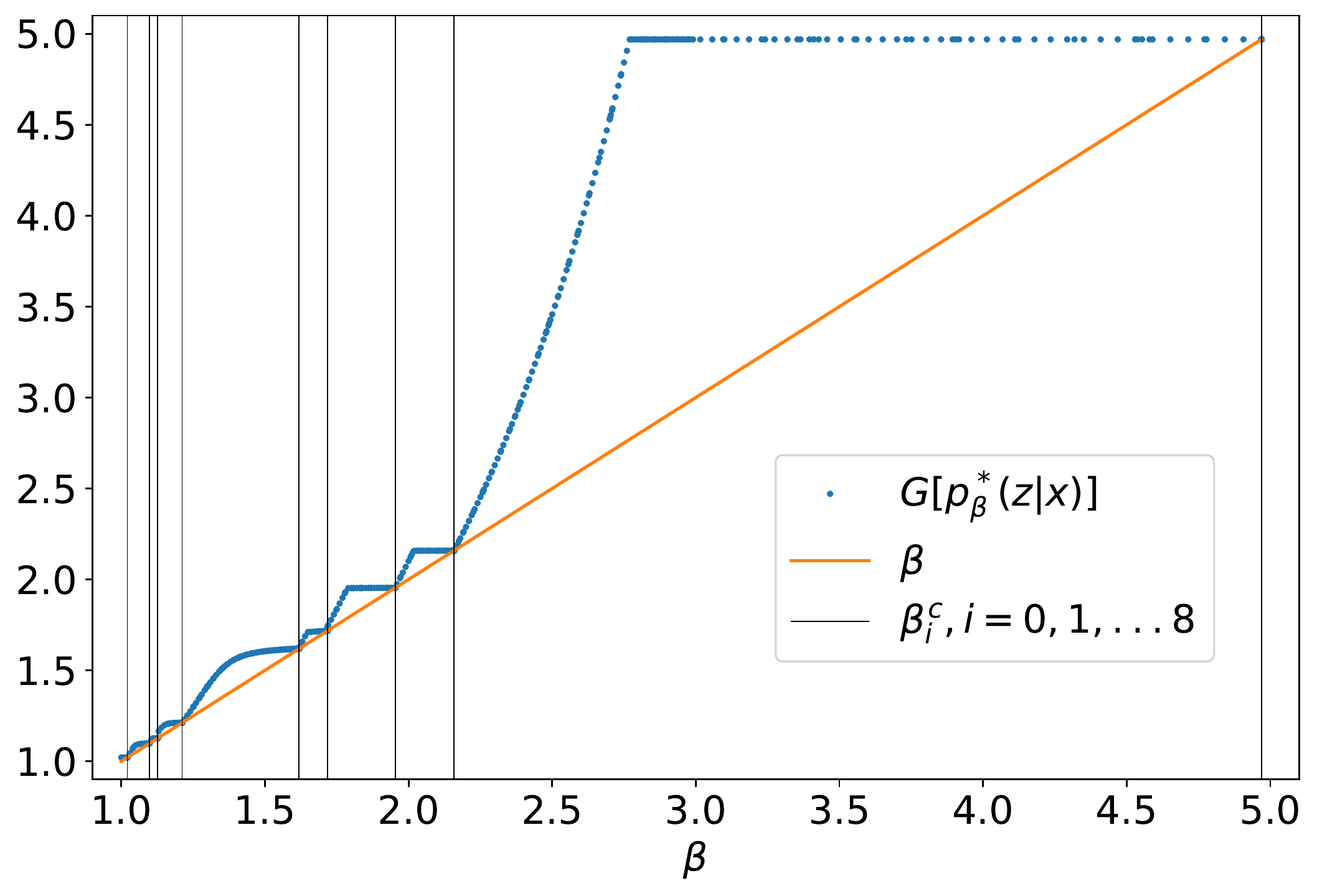}
\end{center}
\caption{}
\end{subfigure}
\begin{subfigure}{.32\columnwidth}
\includegraphics[scale=0.25]{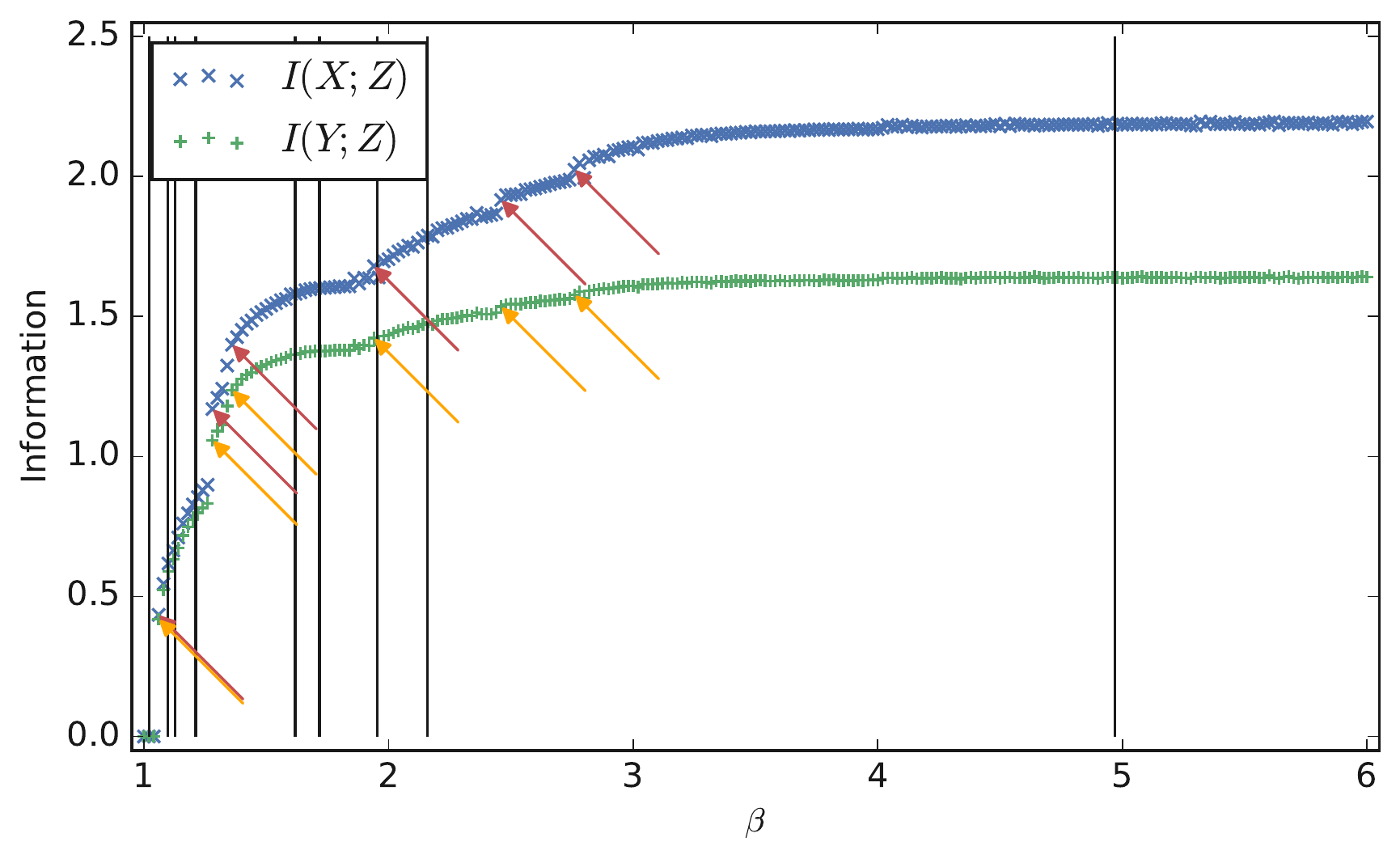}
\caption{}
\end{subfigure}
\hfill
\begin{subfigure}{.32\columnwidth}
\includegraphics[scale=0.26]{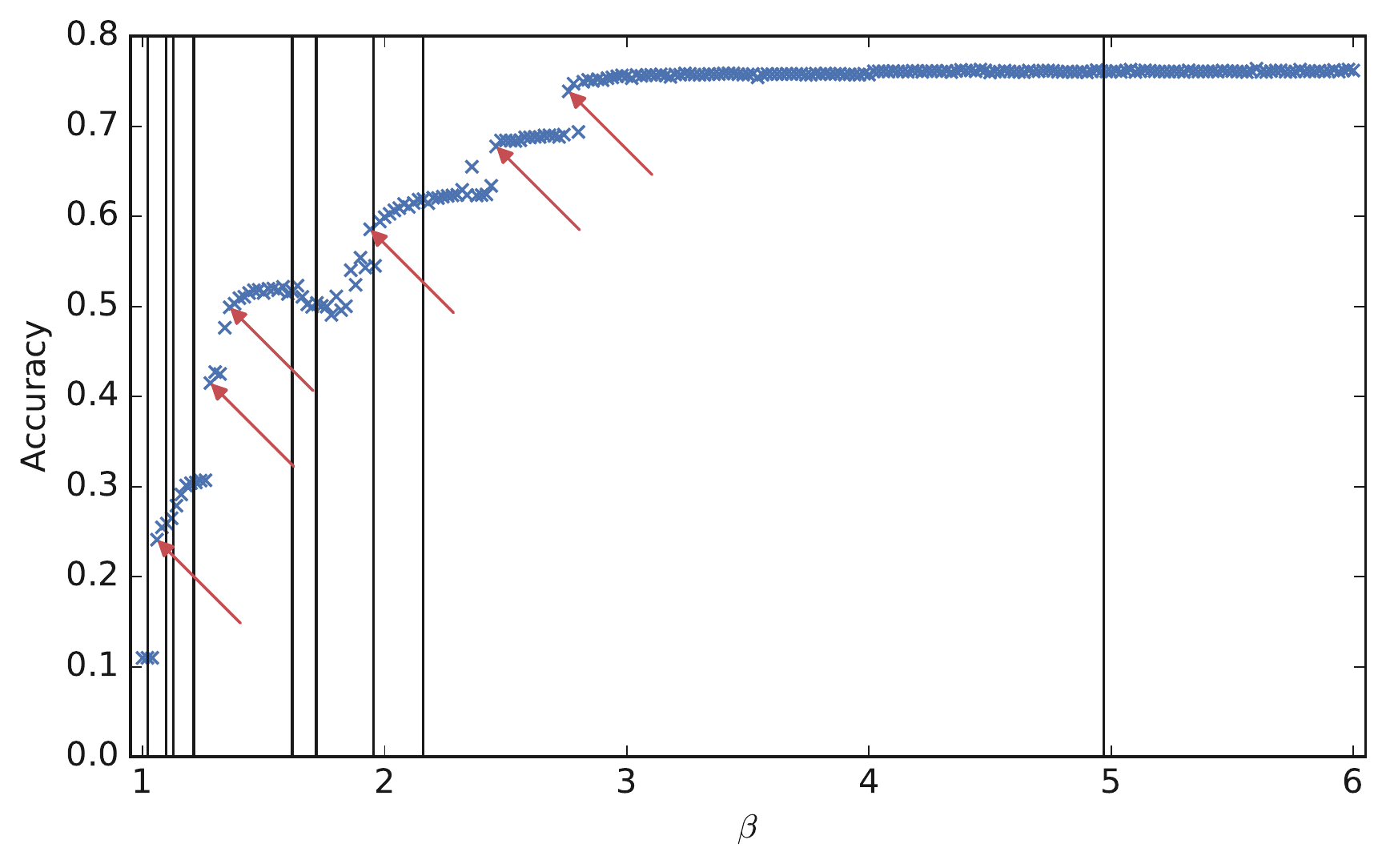}
\caption{}
\end{subfigure}
\caption{%
  \textbf{(a)} Accumulated $G[\pstar]$ vs. $\beta$ by running Alg. 1 with varying starting $\beta$ (blue dots).
  Also plotted are predicted phase transition points.
  \textbf{(b)} $I(X;Z)$ and $I(Y;Z)$ vs. $\beta$.
  The manually-identified phase transition points are labelled with arrows.
  The vertical black lines are the phase transitions identified by Alg. \ref{alg:coordinate_descent}, denoted as $\beta_0^c$, $\beta_1^c$,... $\beta_8^c$, from left to right.
  \textbf{(c)} Accuracy vs. $\beta$ with the same sets of points identified.
  The most interesting region is right before $\beta=2$, where accuracy \emph{decreases} with $\beta$.
  Alg. \ref{alg:coordinate_descent} identifies both sides of that region, as well as points at or near all of the early obvious phase transitions.
  It also seems to miss later transitions, possibly due to the gap between the variational IB objective and the true IB objective in continuous settings.
}%
\label{fig:phase_cifar_exp}%
\end{figure}

\section{Conclusion}

In this work, we observe and study the phase transitions in IB as we vary $\beta$.
We introduce the definition for $\IB$ phase transitions, and based on it derive a formula that gives a practical condition for $\IB$ phase transitions.
We further understand the formula via Jensen's inequality and representational maximum correlation. We reveal the close interplay between the IB objective, the dataset and the learned representation, as each phase transition is learning a nonlinear maximum correlation component in the orthogonal space of the learned representation. 
We present an algorithm for finding the phase transitions, and show that it gives tight matches with observed phase transitions in categorical datasets, predicts onset of learning new classes and class difficulty in MNIST, and predicts prominent transitions in CIFAR10 experiments.
This work is a first theoretical step towards a deeper understanding of the phenomenon of phase transitions in the Information Bottleneck. We believe our approach will be applicable to other ``trade-off'' objectives, like $\beta$-VAE~\citep{betavae} and InfoDropout~\citep{achille2018emergence}, where the model's ability to predict is balanced against a measure of complexity.

\section{Acknowledgements}
The authors would like to thank Alex Alemi, Kevin Murphy, Sergey Ioffe, Isaac Chuang and Max Tegmark for helpful discussions.

\clearpage
\bibliography{iclr2020_conference}
\bibliographystyle{iclr2020_conference}

\newpage
\onecolumn
\appendix
\begin{center}
\begin{huge}
\textbf{Appendix}
\end{huge}
\end{center}

\section{Calculus of variations at any order of  \titlemath{$\IB_\beta[p(z|x)]$}}
\label{app:expand_to_order_n}

Here we prove the Lemma \ref{lemma:expand_to_order_n}, which will be crucial in the lemmas and theorems in this paper that follows.

\begin{lemma}
\label{lemma:expand_to_order_n}
For a relative perturbation function $r(z|x)\in\QQ$ for a $p(z|x)$, where $r(z|x)$ satisfies $\E_{z\sim p(z|x)}[r(z|x)]=0$, we have that the IB objective can be expanded as

\begin{align}
\label{eq_app:expand_to_n_order}
&\IBB_\beta[p(z|x)(1+\epsilon \cdot r(z|x))]\nonumber\\
=&\IBB_\beta[p(z|x)]+\epsilon\cdot\left(\E_{x,z\sim p(x,z)}\left[r(z|x)\logg\frac{p(z|x)}{p(z)}\right]-\beta\cdot \E_{y,z\sim p(y,z)}\left[r(z|y)\logg\frac{p(z|y)}{p(z)}\right]\right)\nonumber\\
&+\sum_{n=2}^{\infty}\frac{(-1)^n \epsilon^n}{n(n-1)}\left\{\left(\E[r^n(z|x)]-\E[r^n(z)]\right)-\beta\cdot\left(\E[r^n(z|y)]-\E[r^n(z)]\right)\right\}\nonumber\\
=&\IBB_\beta[p(z|x)]+\epsilon\cdot\left(\E_{x,z\sim p(x,z)}\left[r(z|x)\logg\frac{p(z|x)}{p(z)}\right]-\beta\cdot \E_{y,z\sim p(y,z)}\left[r(z|y)\logg\frac{p(z|y)}{p(z)}\right]\right)\nonumber\\
&+\frac{\epsilon^2}{1\cdot2}\left\{\left(\E[r^2(z|x)]-\E[r^2(z)]\right)-\beta\cdot\left(\E[r^2(z|y)]-\E[r^2(z)]\right)\right\}\nonumber\\
&-\frac{\epsilon^3}{2\cdot3}\left\{\left(\E[r^3(z|x)]-\E[r^3(z)]\right)-\beta\cdot\left(\E[r^3(z|y)]-\E[r^3(z)]\right)\right\}\nonumber\\
&+\frac{\epsilon^4}{3\cdot4}\left\{\left(\E[r^4(z|x)]-\E[r^4(z)]\right)-\beta\cdot\left(\E[r^4(z|y)]-\E[r^4(z)]\right)\right\}\nonumber\\
&-...\nonumber\\
\end{align}

where $r(z|y)=\E_{x\sim p(x|y,z)}[r(z|x)]$ and $r(z)=\E_{x\sim p(x|z)}[r(z|x)]$. The expectations in the equations are all w.r.t. all variables. For example $\E[r^2(z|x)]=\E_{x,z\sim p(x,z)}[r^2(z|x)]$.
\end{lemma}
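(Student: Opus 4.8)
The plan is to substitute the perturbed encoder $p'(z|x) = p(z|x)(1+\epsilon\cdot r(z|x))$ directly into $\IB_\beta[p(z|x)] = I(X;Z) - \beta\,I(Y;Z)$, track how every marginal and conditional appearing in the two mutual informations transforms, and then expand the resulting logarithms in a power series in $\epsilon$.

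First I would record the induced perturbations of the relevant distributions. Because $\E_{z\sim p(z|x)}[r(z|x)]=0$, the perturbed conditional $p'(z|x)$ is still normalized, $p(x)$ is unchanged, and $p'(x,z)=p(x,z)(1+\epsilon\cdot r(z|x))$. Integrating out $x$ gives $p'(z)=p(z)(1+\epsilon\cdot r(z))$ with $r(z):=\E_{x\sim p(x|z)}[r(z|x)]$. Invoking the Markov chain $Z-X-Y$ (so that $p(x,y,z)=p(x,y)p(z|x)$), the same manipulation gives $p'(y,z)=p(y,z)(1+\epsilon\cdot r(z|y))$ with $r(z|y):=\E_{x\sim p(x|y,z)}[r(z|x)]$, leaves $p(y)$ unchanged, and hence $p'(z|y)=p(z|y)(1+\epsilon\cdot r(z|y))$. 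Along the way I would note the vanishing identities $\E_{x,z\sim p(x,z)}[r(z|x)] = \E_{z\sim p(z)}[r(z)] = \E_{y,z\sim p(y,z)}[r(z|y)] = 0$, which follow from $\E_{z\sim p(z|x)}[r(z|x)]=0$ by Fubini; these kill every term that is linear in $r$ except the ones paired with $\log\frac{p(z|x)}{p(z)}$ or $\log\frac{p(z|y)}{p(z)}$.

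Next I would use the elementary identity $(1+u)\log(1+u) = u + \sum_{n=2}^\infty \frac{(-1)^n}{n(n-1)}u^n$ for $|u|<1$, derived by multiplying $\log(1+u)=\sum_{n\ge1}\frac{(-1)^{n+1}}{n}u^n$ by $(1+u)$ and collecting the coefficient of $u^n$, namely $\frac{(-1)^{n+1}}{n}+\frac{(-1)^{n}}{n-1}=\frac{(-1)^n}{n(n-1)}$. Writing $I'(X;Z) = \E_{p'(x,z)}\!\left[\log\frac{p(z|x)}{p(z)} + \log(1+\epsilon\cdot r(z|x)) - \log(1+\epsilon\cdot r(z))\right]$ and substituting $p'(x,z)=p(x,z)(1+\epsilon\cdot r(z|x))$, the first piece gives $I(X;Z)+\epsilon\,\E_{x,z}[r(z|x)\log\frac{p(z|x)}{p(z)}]$; the $(1+\epsilon\cdot r(z|x))\log(1+\epsilon\cdot r(z|x))$ piece gives $\sum_{n\ge2}\frac{(-1)^n}{n(n-1)}\epsilon^n\E[r^n(z|x)]$ (its linear term integrates to $0$); and in the third piece I integrate out $x$ first, using $\int p(x,z)(1+\epsilon\cdot r(z|x))\,dx = p(z)(1+\epsilon\cdot r(z))$, to get $-\sum_{n\ge2}\frac{(-1)^n}{n(n-1)}\epsilon^n\E[r^n(z)]$. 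The identical computation with $X$ replaced by $Y$ — which goes through verbatim precisely because $p'(y,z)$ and $p'(z|y)$ carry the same multiplicative factor $1+\epsilon\cdot r(z|y)$ — yields the analogous expansion for $I'(Y;Z)$. Forming $\IB_\beta[p'(z|x)]=I'(X;Z)-\beta\,I'(Y;Z)$ and grouping by powers of $\epsilon$ then gives Eq. (\ref{eq_app:expand_to_n_order}); spelling out $n=2,3,4$ produces the displayed terms.

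I expect the main obstacle to be bookkeeping rather than any conceptual difficulty: the delicate points are (i) using the Markov chain correctly so that $p'(y,z)=p(y,z)(1+\epsilon\cdot r(z|y))$ with $r(z|y)=\E_{x\sim p(x|y,z)}[r(z|x)]$ rather than a messier object, which is exactly what lets the $Y$-side mirror the $X$-side, and (ii) justifying the term-by-term integration of the logarithmic series. For the latter, boundedness of $r$ (hence of $r(z)$ and $r(z|y)$) by some $M$ means the expansion of $\log(1+\epsilon\cdot r)$ converges whenever $\epsilon < 1/M$, and dominated convergence legitimizes exchanging the sum with the expectations; I would state this $\epsilon$-smallness restriction explicitly, which is harmless since only the behavior near $\epsilon=0$ is used in the second-variation arguments that follow.
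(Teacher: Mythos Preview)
Your proposal is correct and follows the same overall strategy as the paper---substitute $p'(z|x)=p(z|x)(1+\epsilon\,r(z|x))$ into the two mutual informations, derive the induced perturbations $r(z)$ and $r(z|y)$, and Taylor-expand the logarithms---but your bookkeeping is tidier in two respects worth noting. First, you invoke the closed-form identity $(1+u)\log(1+u)=u+\sum_{n\ge2}\frac{(-1)^n}{n(n-1)}u^n$ up front, whereas the paper expands $\log(1+u)$ alone and then multiplies by the factor $(1+\epsilon\,r(z|x))$, collecting the order-$n$ contribution from two adjacent terms of the series. Second, for the $\log(1+\epsilon\,r(z))$ piece you integrate out $x$ \emph{before} expanding, using $\int p(x,z)(1+\epsilon\,r(z|x))\,dx=p(z)(1+\epsilon\,r(z))$, which immediately reduces that piece to $-\E_{z}[(1+\epsilon\,r(z))\log(1+\epsilon\,r(z))]$; the paper instead keeps the mixed product and only at the end invokes the identity $\E_{x,z}[r(z|x)\,r^{n-1}(z)]=\E_z[r^n(z)]$ to cancel the cross terms. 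Both routes land on the same formula, but yours sidesteps that cross-term algebra entirely. Your explicit remark on convergence (boundedness of $r$, $\epsilon<1/M$, dominated convergence) is also a useful addition that the paper leaves implicit.
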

\begin{proof}
Suppose that we perform a relative perturbation $r(z|x)$ on $p(z|x)$ such that the perturbed conditional probability is $p'(z|x)=p(z|x)\left(1+\epsilon\cdot r(z|x)\right)$, then we have
\beqa
p'(z)&=\int p(x)p'(z|x)dx=\int dx p(x)p(z|x)\left(1+\epsilon\cdot r(z|x)\right)=p(z)+\e\cdot\int dx p(x) p(z|x)r(z|x)
\eeqa

Therefore, we can denote the corresponding relative perturbation $r(z)$ on $p(z)$ as 
\beqa
r(z)\equiv\frac{1}{\epsilon}\frac{p'(z)-p(z)}{p(z)}=\frac{1}{p(z)}\int dx p(x)p(z|x)r(z|x)=\E_{x\sim p(x|z)}[r(z|x)]
\eeqa

Similarly, we have
\beqa
p'(z|y)=\frac{p'(y,z)}{p(y)}=\frac{1}{p(y)}\int dx p(x,y)p(z|x)\left(1+\epsilon\cdot r(z|x)\right)=p(z|y)+\epsilon\cdot \frac{1}{p(y)}\int dx p(x,y)p(z|x)r(z|x)
\eeqa

And we can denote the corresponding relative perturbation $r(z|y)$ on $p(z|y)$ as

\beqa
r(z|y)\equiv \frac{1}{\e}\frac{p'(z|y)-p(z|y)}{p(z|y)}=\frac{1}{p(z|y)p(y)}\int dx p(x,y)p(z|x)r(z|x)=\E_{x\sim p(x|y,z)}[r(z|x)]
\eeqa

Since
\beqa
\IB_\beta[p(z|x)]=I(X;Z)-\beta\cdot I(Y;Z)=\int dxdz p(x,z)\log{\frac{p(z|x)}{p(z)}}-\beta\cdot\int dydz p(y,z)\log{\frac{p(z
|y)}{p(z)}}
\eeqa

We have
\beqa{}
&\IB_\beta[p'(z|x)]=\IB_\beta[p(z|x)(1+\epsilon\cdot r(z|x))]\\
=&\int dxdz p(x)p'(z|x)\log{\frac{p'(z|x)}{p'(z)}}-\beta\cdot\int dydz p(y)p'(z|y)\log{\frac{p'(z|y)}{p'(z)}}\\
=&\int dxdz p(x)p(z|x)(1+\e\cdot r(z|x))\log{\frac{p(z|x)(1+\e\cdot r(z|x))}{p(z)(1+\e\cdot r(z))}}\\
&-\beta\cdot\int dydz p(y)p(z|y)(1+\e\cdot r(z|y))\log{\frac{p(z|y)(1+\e\cdot r(z|y))}{p(z)(1+\e\cdot r(z))}}\\
=&\int dxdz p(x)p(z|x)(1+\e\cdot r(z|x))\left[\log{\frac{p(z|x)}{p(z)}}+\log\left(1+\e\cdot r(z|x)\right)-\log\left(1+\e\cdot r(z)\right)\right]\\
&-\beta\cdot\int dydz p(y)p(z|y)(1+\e\cdot r(z|y))\left[\log{\frac{p(z|y)}{p(z)}}+\log\left(1+\e\cdot r(z|y)\right)-\log\left(1+\e\cdot r(z)\right)\right]\\
=&\int dxdz p(x)p(z|x)(1+\e\cdot r(z|x))\left[\log{\frac{p(z|x)}{p(z)}}+\sum_{n=1}^{\infty}(-1)^{n-1}\frac{\e^n}{n}\left(r(z|x)-r(z)\right)\right]\\
&-\beta\cdot\int dydz p(y)p(z|y)(1+\e\cdot r(z|y))\left[\log{\frac{p(z|y)}{p(z)}}+\sum_{n=1}^{\infty}(-1)^{n-1}\frac{\e^n}{n}\left(r(z|y)-r(z)\right)\right]\\
\eeqa

The $0^\text{th}$-order term is simply $\IB_\beta[p(z|x)]$. The first order term is $$\delta\IB_\beta[p(z|x)]=\epsilon\cdot\left(\E_{x,z\sim p(x,z)}\left[r(z|x)\log\frac{p(z|x)}{p(z)}\right]-\beta\cdot \E_{y,z\sim p(y,z)}\left[r(z|y)\log\frac{p(z|y)}{p(z)}\right]\right)$$

The $n^\text{th}$-order term for $n\ge2$ is
\beqa
&\delta^n\IB_\beta[p(z|x)]\\
=&(-1)^n\e^n\int dxdz p(x)p(z|x)\left(-\frac{1}{n}\left[r^n(z|x)-r^n(z)\right]+r(z|x)\frac{1}{n-1}\left[r^{n-1}(z|x)-r^n(z)\right]\right)\\
&-\beta\cdot (-1)^n\e^n\int dydz p(y)p(z|y)\left(-\frac{1}{n}\left[r^n(z|y)-r^n(z)\right]+r(z|y)\frac{1}{n-1}\left[r^{n-1}(z|y)-r^n(z)\right]\right)\\
=&\frac{(-1)^n\e^n}{n(n-1)}\left(\E_{x,z\sim p(x,z)}[r^n(z|x)]-n\E_{x,z\sim p(x,z)}[r(z|x)r^{n-1}(z)]-(n-1)\E_{z\sim p(z)}[r^n(z)]\right)\\
&-\beta\cdot\frac{(-1)^n\e^n}{n(n-1)}\left(\E_{y,z\sim p(y,z)}[r^n(z|y)]-n\E_{y,z\sim p(y,z)}[r(z|y)r^{n-1}(z)]-(n-1)\E_{z\sim p(z)}[r^n(z)]\right)\\
=&\frac{(-1)^n \e^n}{n(n-1)}\left\{\left(\E[r^n(z|x)]-\E[r^n(z)]\right)-\beta\cdot\left(\E[r^n(z|y)]-\E[r^n(z)]\right)\right\}
\eeqa

In the last equality we have used $$\E_{x,z\sim p(x,z)}[r(z|x)r^{n-1}(z)]=\E_{z\sim p(z)}[r^{n-1}(z)\E_{x\sim p(x|z)}[r(z|x)]]=\E_{z\sim p(z)}[r^{n-1}(z)r(z)]=\E_{z\sim p(z)}[r^n(z)]$$

Combining the terms with all orders, we have
\beqa
&\IB_\beta[p(z|x)(1+\epsilon \cdot r(z|x))]\\
=&\IB_\beta[p(z|x)]+\epsilon\cdot\left(\E_{x,z\sim p(x,z)}\left[r(z|x)\log\frac{p(z|x)}{p(z)}\right]-\beta\cdot \E_{y,z\sim p(y,z)}\left[r(z|y)\log\frac{p(z|y)}{p(z)}\right]\right)\\
&+\sum_{n=2}^{\infty}\frac{(-1)^n \epsilon^n}{n(n-1)}\left\{\left(\E[r^n(z|x)]-\E[r^n(z)]\right)-\beta\cdot\left(\E[r^n(z|y)]-\E[r^n(z)]\right)\right\}
\eeqa
\end{proof}

As a side note, the KL-divergence between $p'(z|x)=p(z|x)(1+\e\cdot r(z|x))$ and $p(z|x)$ is

\beqa
\KL\left(p'(z|x)||p(z|x)\right)&=
\int dz p(z|x)(1+\e\cdot r(z|x))\log\frac{p(z|x)(1+\e\cdot r(z|x))}{p(z|x)}\\
&=\int dz p(z|x)(1+\e\cdot r(z|x))\left(\e\cdot r(z|x)-\frac{\e^2}{2}\cdot r^2(z|x))+O(\e^3)\right)\\
&=\e\cdot\int dz p(z|x) r(z|x)+\frac{\e^2}{2}\int dz p(z|x)r^2(z|x)+O(\e^3)\\
&=\frac{\e^2}{2}\E_{z\sim p(z|x)}[r^2(z|x)]+O(\e^3)
\eeqa

Therefore, to the second order, we have
\beq{eq_app:KL_r}
\E_{x\sim p(x)}\left[\KL\left(p'(z|x)||p(z|x)\right)\right]=\frac{\e^2}{2}\E[r^2(z|x)]
\eeq

Similarly, we have $\E_{x\sim p(x)}\left[\KL\left(p(z|x)||p'(z|x)\right)\right]=\frac{\e^2}{2}\E[r^2(z|x)]$ up to the second order. Using similar procedure, we have up to the second-order,

\beqa
&\E_{y\sim p(y)}\left[\KL\left(p'(z|y)||p(z|y)\right)\right]=\E_{y\sim p(y)}\left[\KL\left(p(z|y)||p'(z|y)\right)\right]=\frac{\e^2}{2}\E[r^2(z|y)]\\
&\KL\left(p'(z)||p(z)\right)=\KL\left(p(z)||p'(z)\right)=\frac{\e^2}{2}\E[r^2(z)]\\
\eeqa

\section{Proof of Lemma \ref{lemma:second_order_variation}}
\label{app:lemma_G}
\begin{proof}
From Lemma \ref{lemma:expand_to_order_n}, we have
\beq{eq_app:IB_second_order}
\delta^2\IB_\beta[p(z|x)]=\frac{\epsilon^2}{2}\left\{\left(\E[r^2(z|x)]-\E[r^2(z)]\right)-\beta\cdot\left(\E[r^2(z|y)]-\E[r^2(z)]\right)\right\}
\eeq

The condition of 
\beq{eq:condition_second_order_app}
\forall r(z|x)\in\QQ, \delta^2\IB_\beta[p(z|x)]\ge0
\eeq
is equivalent to 
\beq{eq:condition_second_order_app_2}
\forall r(z|x)\in\QQ, \beta\cdot\left(\E[r^2(z|y)]-\E[r^2(z)]\right)\le\E[r^2(z|x)]-\E[r^2(z)]
\eeq
Using Jensen's inequality and the convexity of the square function, we have

\beqa
\E[r^2(z|y)]&=\E_{y,z\sim p(y,z)}\left[\left(\E_{x\sim p(x|y,z)}[r(z|x)]\right)^2\right]\\
&=\E_{z\sim p(z)}\left[\E_{y\sim p(y|z)}\left[\left(\E_{x\sim p(x|y,z)}[r(z|x)]\right)^2\right]\right]\\
&\ge \E_{z\sim p(z)}\left[\left(\E_{y\sim p(y|z)}\left[\E_{x\sim p(x|y,z)}[r(z|x)]\right]\right)^2\right]\\
&=\E_{z\sim p(z)}\left[\left(\E_{x\sim p(x|z)}[r(z|x)]\right)^2\right]\\
&=\E[r^2(z)]
\eeqa

The equality holds iff $r(z|y)=\E_{x\sim p(x|y,z)}[r(z|x)]$ is constant w.r.t. $y$, for any $z$.

Using Jensen's inequality on $\E[r^2(z)]$, we have
$\E[r^2(z)]=\E_{z\sim p(z)}\left[\left(\E_{x\sim p(x|z)}[r(z|x)]\right)^2\right]\le\E_{z\sim p(z)}\left[\E_{x\sim p(x|z)}[r^2(z|x)]\right]=\E[r^2(z|x)]$, where the equality holds iff $r(z|x)$ is constant w.r.t. $x$ for any $z$. 

When $\E[r^2(z|y)]-\E[r^2(z)]>0$, we have that the condition Eq. (\ref{eq:condition_second_order_app_2}) is equivalent to $\forall r(z|x)\in\QQ$, $\beta\le\frac{\E[r^2(z|x)]-\E[r^2(z)]}{\E[r^2(z|y)]-\E[r^2(z)]}$, i.e.

\beq{eq_app:condition_second_order_app_3}
\beta\le G[p(z|x)]\equiv\inf_{r(z|x)\in\QQ}\frac{\E[r^2(z|x)]-\E[r^2(z)]}{\E[r^2(z|y)]-\E[r^2(z)]}
\eeq
where $r(z|y)=\E_{x\sim p(x|y,z)}[r(z|x)]$ and $r(z)=\E_{x\sim p(x|z)}[r(z|x)]$.

If $\E[r^2(z|y)]-\E[r^2(z)]=0$, substituting into Eq. (\ref{eq:condition_second_order_app_2}), we have

\beq{eq_app:condition_second_order_app_4}
\beta\cdot0\le\E[r^2(z|x)]-\E[r^2(z)]
\eeq

which is always true due to that $\E[r^2(z|x)]\ge\E[r^2(z)]$, and will be a looser condition than Eq. (\ref{eq_app:condition_second_order_app_3}) above. Above all, we have Eq. (\ref{eq_app:condition_second_order_app_3}).

\end{proof}

\paragraph{Empirical estimate of \texorpdfstring{$G[p(z|x)]$}:} 

To empirically estimate $G[p(z|x)]$ from a minibatch of $\{(x_i, y_i)\},i=1,2,...N$ and the encoder $p(z|x)$, we can make the following Monte Carlo importance sampling estimation, where we use the samples $\{x_j\}\sim p(x)$ and also get samples of $\{z_i\}\sim p(z)=p(x)p(z|x)$, and have:

\beqa
\E_{x,z\sim p(x,z)}[r^2(z|x)]=&\int dxdzp(x)p(z) \frac{p(x,z)}{p(x)p(z)}r^2(z|x)\\
\simeq&\frac{1}{N^2}\sum_{i=1}^{N}\sum_{j=1}^{N} \frac{p(x_j,z_i)}{p(x_j)p(z_i)}r^2(z_i|x_j)\\
\E_{z\sim p(z)}[r^2(z)]=&\E_{z\sim p(z)}\left[\left(\E_{x\sim p(x|z)}[r(z|x)]\right)^2\right]\\
\simeq&\frac{1}{N}\sum_{i=1}^{N} \left(\int dx p(x|z_i) r(z_i|x)\right)^2\\
=&\frac{1}{N}\sum_{i=1}^{N} \left(\int dx p(x) \frac{p(z_i|x)}{p(z_i)} r(z_i|x)\right)^2\\
\simeq&\frac{1}{N}\sum_{i=1}^{N}\left(\frac{1}{N}\sum_{j=1}^{N}\frac{p(z_i|x_j)}{p(z_i)}r(z_i|x_j)\right)^2\\
\simeq&\frac{1}{N}\sum_{i=1}^{N}\left(\frac{1}{N}\sum_{j=1}^{N}\frac{p(z_i|x_j)}{\frac{1}{N}\sum_{k=1}^{N}p(z_i|x_k)}r(z_i|x_j)\right)^2\\
=&\frac{1}{N}\sum_{i=1}^{N}\left(\frac{\sum_{j=1}^{N}p(z_i|x_j)r(z_i|x_j)}{\sum_{j=1}^{N}p(z_i|x_j)}\right)^2\\
\eeqa

\beqa
\E_{y,z\sim p(y,z)}[r^2(z|y)]=&\E_{y,z\sim p(y,z)}\left[\left(\E_{x\sim p(x|y,z)}[r(z|x)]\right)^2\right]\\
\simeq&\frac{1}{N}\sum_{i=1}^{N}\left(\int dx p(x|y_i,z_i)r(z_i|x)\right)^2\\
=&\frac{1}{N}\sum_{i=1}^{N}\left(\frac{1}{p(y_i,z_i)}\int dx p(y_i)p(x|y_i)p(z_i|x) r(z_i|x)\right)^2\\
=&\frac{1}{N}\sum_{i=1}^{N}\left(\frac{\int dx p(y_i)p(x|y_i)p(z_i|x) r(z_i|x)}{\int dx p(y_i)p(x|y_i)p(z_i|x)}\right)^2\\
\simeq&\frac{1}{N}\sum_{i=1}^{N}\left(\frac{\sum_{x_j\in \Omega_x(y_i)}p(z_i|x_j) r(z_i|x_j)}{\sum_{x_j\in \Omega_x(y_i)}p(z_i|x_j)}\right)^2\\
=&\frac{1}{N}\sum_{i=1}^{N}\left(\frac{\sum_{j=1}^N p(z_i|x_j) r(z_i|x_j)\mathbbm{1}\left[y_i=y_j\right]}{\sum_{j=1}^N p(z_i|x_j)\mathbbm{1}\left[y_i=y_j\right]}\right)^2\\
\eeqa

Here $\Omega_x(y_i)$ denotes the set of $x$ examples that has label of $y_i$, and $\mathbbm{1}[\cdot]$ is an indicator function that takes value 1 if its argument is true, 0 otherwise. 

The requirement of $\E_{z\sim p(z|x)}[r(z|x)]=0$ yields

\beq{eq_app:G_empirical_cond1}
0=\E_{z\sim p(z|x)}[r(z|x)]=\int dz p(z)\frac{p(z|x)}{p(z)}r(z|x)\simeq\frac{1}{N}\sum_{i=1}^N \frac{p(z_i|x_j)}{p(z_i)}r(z_i|x_j)
\eeq

for any $x_j$.

Combining all terms, we have that the empirical $\hat{G}[p(z|x)]$ is given by

\beq{eq_app:G_empirical1}
\hat{G}[p(z|x)]=\inf_{r(z|x)\in\QQ}\frac{\frac{1}{N}\sum_{i=1}^{N}\sum_{j=1}^{N} \frac{p(x_j,z_i)}{p(x_j)p(z_i)}r^2(z_i|x_j)-\sum_{i=1}^{N}\left(\frac{\sum_{j=1}^{N}p(z_i|x_j)r(z_i|x_j)}{\sum_{j=1}^{N}p(z_i|x_j)}\right)^2}{\sum_{i=1}^{N}\left(\frac{\sum_{j=1}^Np(z_i|x_j) r(z_i|x_j)\mathbbm{1}[y_i=y_j]}{\sum_{j=1}^N p(z_i|x_j)\mathbbm{1}[y_i=y_j]}\right)^2-\sum_{i=1}^{N}\left(\frac{\sum_{j=1}^{N}p(z_i|x_j)r(z_i|x_j)}{\sum_{j=1}^{N}p(z_i|x_j)}\right)^2}
\eeq

where $\{z_i\}\sim p(z)$ and $\{x_i\}\sim p(x)$. It is also possible to use different distributions for importance sampling, which will results in different formulas for empirical estimation of $G[p(z|x)]$.

\section{\titlemath{$G_\Theta[p_\thetaa(z|x)]$ for parameterized distribution $p_\thetaa(z|x)$}}
\label{app:G_theta}

\begin{proof}
For the parameterized\footnote{In this paper, $\thetaa=(\theta_1,\theta_2,...\theta_k)^T$ and $\frac{\partial\pthe}{\partial\thetaa}=\left(\frac{\partial \pthe}{\partial \theta_1},\frac{\partial \pthe}{\partial \theta_2},...\frac{\partial \pthe}{\partial \theta_k}\right)^T$ are all column vectors. $\frac{\partial^2\pthe}{\partial\thetaa^2}$ is a $k\times k$ matrix with $(i,j)$ element of $\frac{\partial^2\pthe}{\partial\theta_i\partial\theta_j}$.} 
$p_\thetaa(z|x)$ with $\thetaa\in\Theta$, after $\thetaa'\gets\thetaa+\Delta\thetaa$, where\footnote{Note that since $\Theta$ is a field, it is closed under subtraction, we have $\Dthe\in\Theta$.} $\Delta\thetaa\in\Theta$ is an infinitesimal perturbation on $\thetaa$, we have that the distribution changes from $p_\thetaa(z|x)$ to $p_{\thetaa+\Delta\thetaa}(z|x)$, and thus the relative perturbation on $p_\thetaa(z|x)$ is
\beqa
&\e\cdot r(z|x)=\frac{p_{\thetaa+\Delta\thetaa}(z|x)-p_{\thetaa}(z|x)}{p_{\thetaa}(z|x)}\\
=&\frac{1}{\pthe}\left(\pthe+\Dthe^T\frac{\partial \pthe}{\partial\thetaa}+\frac{1}{2}\Dthe^T\frac{\partial^2 p_\thetaa(z|x)}{\partial \thetaa^2}\Dthe+O(\norm{\Dthe}^3)-\pthe\right)\\
\simeq&\Dthe^T\frac{\partial}{\partial\thetaa}\log\pthe+\frac{1}{2}\Dthe^T\frac{1}{\pthe}\frac{\partial^2 p_\thetaa(z|x)}{\partial \thetaa^2}\Dthe+O(\norm{\Dthe}^3)
\eeqa

where $\norm{\Dthe}$ is the norm of $\Dthe$ in the parameter field $\Theta$.

Similarly, we have

\beqa
\e\cdot r(z|y)&=\Dthe^T\partialthe\log\, p_{\thetaa}(z|y)+\frac{1}{2}\Dthe^T\frac{1}{p_\thetaa(z|y)}\frac{\partial^2 p_\thetaa(z|y)}{\partial \thetaa^2}\Dthe+O(\norm{\Dthe}^3)\\
\e\cdot r(z)&=\Dthe^T\partialthe\log\, p_{\thetaa}(z)+\frac{1}{2}\Dthe^T\frac{1}{p_\thetaa(z)}\frac{\partial^2 p_\thetaa(z)}{\partial \thetaa^2}\Dthe+O(\norm{\Dthe}^3)\\
\eeqa

Substituting the above expressions into the expansion of $\IB_\beta[p(z|x)]$ in Eq. (\ref{eq_app:expand_to_n_order}), and preserving to the second order  $\norm{\Delta\thetaa}^2$, we have

\beqa
&\IB_\beta[\pthe(1+\epsilon \cdot r(z|x))]\nonumber\\
=&\IB_\beta[\pthe]+\epsilon\cdot\left(\E_{x,z\sim p_\thetaa(x,z)}\left[r(z|x)\log\frac{\pthe}{p_\thetaa(z)}\right]-\beta\cdot \E_{y,z\sim p_\thetaa(y,z)}\left[r(z|y)\log\frac{p_\thetaa(z|y)}{p_\thetaa(z)}\right]\right)\nonumber\\
&+\frac{\epsilon^2}{1\cdot2}\left\{\left(\E_{x,z\sim p_\thetaa(x,z)}[r^2(z|x)]-\E_{z\sim p_\thetaa(z)}[r^2(z)]\right)-\beta\cdot\left(\E_{y,z\sim p_\thetaa(y,z)}[r^2(z|y)]-\E_{z\sim p_\thetaa(z)}[r^2(z)]\right)\right\}\nonumber\\
=&\IB_\beta[\pthe]+\E_{x,z\sim p_\thetaa(x,z)}\left[\left(\Dthe^T\frac{\partial}{\partial\thetaa}\log\,\pthe+\frac{1}{2}\Dthe^T\frac{1}{\pthe}\frac{\partial^2 p_\thetaa(z|x)}{\partial \thetaa^2}\Dthe\right)\log\frac{\pthe}{p_\thetaa(z)}\right]\\
&-\beta\cdot \E_{y,z\sim p_\thetaa(y,z)}\left[\left(\Dthe^T\partialthe\log\, p_{\thetaa}(z|y)+\frac{1}{2}\Dthe^T\frac{1}{p_\thetaa(z|y)}\frac{\partial^2 p_\thetaa(z|y)}{\partial \thetaa^2}\Dthe\right)\log\frac{p_\thetaa(z|y)}{p_\thetaa(z)}\right]\nonumber\\
&+\frac{1}{2}\left(\E_{x,z\sim p_\thetaa(x,z)}\left[\left(\Dthe^T\frac{\partial}{\partial\thetaa}\log\,\pthe\right)^2\right]-\E_{z\sim p_\thetaa(z)}\left[\left(\Dthe^T\frac{\partial}{\partial\thetaa}\log\, p_\thetaa(z)\right)^2\right]\right)\\
&-\frac{\beta}{2}\left(\E_{y,z\sim p_\thetaa(y,z)}\left[\left(\Dthe^T\frac{\partial}{\partial\thetaa}\log\, p_\thetaa(z|y)\right)^2\right]-\E_{z\sim p_\thetaa(z)}\left[\left(\Dthe^T\frac{\partial}{\partial\thetaa}\log\, p_\thetaa(z)\right)^2\right]\right)\nonumber\\
=&\IB_\beta[\pthe]+\Dthe^T\left\{\E_{x,z\sim p_\thetaa(x,z)}\left[\log\frac{\pthe}{p_\thetaa(z)}\frac{\partial}{\partial\thetaa}\log\pthe\right]-\beta\cdot \E_{x,z\sim p_\thetaa(x,z)}\left[\log\frac{\pthe}{p_\thetaa(z)}\frac{\partial}{\partial\thetaa}\log\pthe\right]\right\}\\
&+\frac{1}{2}\Delta\thetaa^T\left\{\left(\mathcal{I}_{Z|X}(\thetaa)-\mathcal{I}_{Z}(\thetaa)\right)-\beta\left(\mathcal{I}_{Z|X}(\thetaa)-\mathcal{I}_{Z}(\thetaa)\right)\right\}\Delta\thetaa\\
\eeqa

In the last equality we have used $\E_{x,z\sim p_\thetaa(x,z)}[\frac{1}{\pthe}\frac{\partial^2\pthe}{\partial\thetaa^2}]=\int dx p(x)\frac{\partial^2}{\partial\thetaa^2}\int dz\pthe=\int dx p(x)\frac{\partial^2}{\partial\thetaa^2}1=\mathbf{0}$, and similarly $\E_{y,z\sim p_\thetaa(y,z)}[\frac{1}{p_\thetaa(z|y)}\frac{\partial^2 p_\thetaa(z|y)}{\partial\thetaa^2}]=\mathbf{0}$. In other words, the $\norm{\Dthe}^2$ terms in the first-order variation $\delta \IB_\beta[\pthe]$ vanish, and the remaining $\norm{\Dthe}^2$ are all in $\delta^2 \IB_\beta[\pthe]$. Also in the last expression, $\mathcal{I}_{Z}(\thetaa)\equiv\int dz p_\thetaa (z)\left(\frac{\partial \log p_\thetaa(z)}{\partial \thetaa}\right)\left(\frac{\partial \log p_\thetaa(z)}{\partial \thetaa}\right)^T$ is the Fisher information matrix of $\thetaa$ for $Z$, $\mathcal{I}_{Z|X}(\thetaa)\equiv \int dxdz p(x) p_\thetaa(z|x)\left(\frac{\partial \log p_\thetaa(z|x)}{\partial \thetaa}\right)\left(\frac{\partial \log p_\thetaa(z|x)}{\partial \thetaa}\right)^T$, $\mathcal{I}_{Z|Y}(\thetaa)\equiv \int dydz p(y) p_\thetaa(z|y)\left(\frac{\partial \log p_\thetaa(z|y)}{\partial \thetaa}\right)\left(\frac{\partial \log p_\thetaa(z|y)}{\partial \thetaa}\right)^T$  are the conditional Fisher information matrix \citep{fisherproperty} of $\thetaa$ for $Z$ conditioned on $X$ and $Y$, respectively.

Let us look at
\beq{eq_app:IB_second_order_theta}
\delta^2\IB_\beta[\pthe]=\frac{1}{2}\Delta\thetaa^T\left\{\left(\mathcal{I}_{Z|X}(\thetaa)-\mathcal{I}_{Z}(\thetaa)\right)-\beta\left(\mathcal{I}_{Z|X}(\thetaa)-\mathcal{I}_{Z}(\thetaa)\right)\right\}\Delta\thetaa
\eeq

Firstly, note that $\delta^2\IB_\beta[\pthe]$ is a quadratic function of $\Dthe$, and the scale of $\Dthe$ does not change the sign of $\delta^2\IB_\beta[\pthe]$, so the condition of $\forall \Dthe\in\Theta$, $\delta^2\IB_\beta[\pthe]\ge0$ is invariant to the scale of $\Dthe$, and is describing the ``curvature'' in the infinitesimal neighborhood of $\thetaa$. Therefore, $\Dthe$ can explore any value in $\Theta$. Secondly, we see that Eq. (\ref{eq_app:IB_second_order_theta}) is a special case of Eq. (\ref{eq_app:IB_second_order}) with $\e\cdot r(z|x)=\Dthe^T\partialthe\log\,\pthe$. Therefore, The inequalities due to Jensen still hold:
$\e^2\left(\E[r^2(z|x)]-\E[r^2(z)]\right)=\Delta\thetaa^T\left(\mathcal{I}_{Z|X}(\thetaa)-\mathcal{I}_{Z}(\thetaa)\right)\Dthe\ge0$, $\e^2\left(\E[r^2(z|y)]-\E[r^2(z)]\right)=\Delta\thetaa^T\left(\mathcal{I}_{Z|Y}(\thetaa)-\mathcal{I}_{Z}(\thetaa)\right)\Dthe\ge0$. 
If $\Delta\thetaa^T\left(\mathcal{I}_{Z|Y}(\thetaa)-\mathcal{I}_{Z}(\thetaa)\right)\Dthe>0$, then the condition of $\forall \Dthe\in\Theta$, $\delta^2\IB_\beta[\pthe]\ge0$ is equivalent to $\forall \Dthe\in\Theta$,

$$\beta\le\frac{\Delta\thetaa^T\left(\mathcal{I}_{Z|X}(\thetaa)-\mathcal{I}_{Z}(\thetaa)\right)\Delta\thetaa}{\Delta\thetaa^T\left(\mathcal{I}_{Z|Y}(\thetaa)-\mathcal{I}_{Z}(\thetaa)\right)\Delta\thetaa}$$ 

i.e.

\beq{eq_app:G_theta}
\beta\le G_\Theta[\pthe]\equiv\inf_{\Delta\thetaa\in\Theta}\frac{\Delta\thetaa^T\left(\mathcal{I}_{Z|X}(\thetaa)-\mathcal{I}_{Z}(\thetaa)\right)\Delta\thetaa}{\Delta\thetaa^T\left(\mathcal{I}_{Z|Y}(\thetaa)-\mathcal{I}_{Z}(\thetaa)\right)\Delta\thetaa}
\eeq

If $\Delta\thetaa^T\left(\mathcal{I}_{Z|Y}(\thetaa)-\mathcal{I}_{Z}(\thetaa)\right)\Dthe=0$, we have that Eq. (\ref{eq_app:IB_second_order_theta}) always holds, which is a looser condition than Eq. (\ref{eq_app:G_theta}). Above all, we have that the condition of $\forall\Dthe\in\Theta$, $\delta^2\IB_\beta[\pthe]$ is equivalent to $\beta\le G_\Theta[\pthe]$.

Moreover, $\left(G_\Theta[\pthe]\right)^{-1}$ given by Eq. (\ref{eq_app:G_theta}) has the format of a generalized Rayleigh quotient $R(A,B;x)\equiv \frac{\Dthe^TA\Dthe}{\Dthe^TB\Dthe}$ where $A=\mathcal{I}_{Z|Y}(\thetaa)-\mathcal{I}_{Z}(\thetaa)$ and $B=\mathcal{I}_{Z|X}(\thetaa)-\mathcal{I}_{Z}(\thetaa)$ are both Hermitian matrices\footnote{Here all the Fisher information matrices are real symmetric, thus Hermitian.}, which can be reduced to Rayleigh quotient $R(D,C^T\Dthe)=\frac{(C^T\Dthe)^TD(C^T\Dthe)}{(C^T\Dthe)^T(C^T\Dthe)}$, with the transformation $D=C^{-1}A(C^T)^{-1}$ where $CC^T$ is the Cholesky decomposition of $B=\mathcal{I}_{Z|X}(\thetaa)-\mathcal{I}_{Z}(\thetaa)$. Moreover, we have that when  $G_\Theta[\pthe]$ attains its minimum value, the Reyleigh quotient $R(D,C^T\Dthe)$ attains its maximum value of $\lambda_\text{max}$ with $C^T\Dthe=v_\text{max}$, i.e. $\Dthe=(C^T)^{-1}v_\text{max}$, where $\lambda_\text{max}$ is the largest eigenvalue of $D$ and $v_\text{max}$ the corresponding eigenvector.

\end{proof}

\section{Proof of Theorem \ref{thm:phase_transition}}
\label{app:phase_transition}

\begin{proof}
Define 

\beq{eq:def_T_beta}
T_\beta(\beta'):=\inf\limits_{r(z|x)\in \QQ}\left[\left(\E_{\beta}[r^2(z|x)]-\E_{\beta}[r^2(z)]\right)-\beta'\cdot\left(\E_{\beta}[r^2(z|y)]-\E_{\beta}[r^2(z)]\right)\right]
\eeq
where $\E_\beta[\cdot]$ denotes taking expectation w.r.t. the optimal solution 
$p_{\beta}^*(x,y,z)=p(x,y)p_{\beta}^*(z|x)$ at $\beta$. Using Lemma \ref{lemma:expand_to_order_n}, we have that the IB phase transition as defined in Definition \ref{definition:phase_transition_0} corresponds to satisfying the following two equations:
\beq{eq:proof_theorem_1}
T_\beta(\beta')\rvert_{\beta'=\beta}\ge0\\
\eeq
\beq{eq:proof_theorem_1_eq20}
\lim\limits_{\beta'\to\beta^+} T_\beta(\beta')=0^-
\eeq

Now we prove that $T_\beta(\beta')$ is continuous at $\beta'=\beta$, i.e. $\forall \varepsilon>0$, $\exists \delta>0$ s.t. $\forall \beta\in(\beta-\delta,\beta+\delta)$, we have $|T_\beta(\beta')-T_\beta(\beta)|<\epsilon$.

From Eq. (\ref{eq:def_T_beta}), we have
$T_\beta(\beta')-T_\beta(\beta)=-(\beta'-\beta)\cdot\left(\E_{\beta}[r^2(z|y)]-\E_{\beta}[r^2(z)]\right)$. Since $r(z|x)$ is bounded, i.e. $\exists M>0 \text{ s.t. }\forall z\in\Z, x\in\X, |r(z|x)|\le M$, we have

\beqa
\left\lvert\E_\beta\left[r^2(z|y)\right]\right\rvert=\left\lvert\E_\beta\left[\left(\E_{x\sim p(x|y,z)}\left[r(z|x)\right]\right)^2\right]\right\rvert\le &\left\lvert\E_\beta\left[\left(\E_{x\sim p(x|y,z)}\left[M\right]\right)^2\right]\right\rvert=M^2
\eeqa

Similarly, we have
\beqa
\left\lvert\E_\beta\left[r^2(z)\right]\right\rvert=\left\lvert\E_\beta\left[\left(\E_{x\sim p(x|z)}\left[r(z|x)\right]\right)^2\right]\right\rvert\le&\left\lvert\E_\beta\left[\left(\E_{x\sim p(x|z)}\left[M\right]\right)^2\right]\right\rvert=M^2
\eeqa

Hence, $\left\lvert T_\beta(\beta')-T_\beta(\beta)\right\rvert= |\beta'-\beta|\left\lvert E_\beta[r^2(z|y)]-E_\beta[r^2(z)]\right\rvert\le 2|\beta'-\beta|M^2$. 

To prove that $T_\beta(\beta')$ is continuous at $\beta'=\beta$, we have $\forall\varepsilon>0$, $\exists \delta=\frac{\varepsilon}{2M^2}>0$, s.t. $\forall \beta'\in(\beta-\delta,\beta+\delta)$, we have
$$|T_\beta(\beta')-T_\beta(\beta)|\le 2|\beta'-\beta|M^2<2 \delta M^2=2\frac{\varepsilon}{2M^2}M^2=\varepsilon$$
Hence $T_\beta(\beta')$ is continuous at $\beta'=\beta$.

Combining the continuity of $T_\beta(\beta')$ at $\beta'=\beta$, and Eq. (\ref{eq:proof_theorem_1}) and (\ref{eq:proof_theorem_1_eq20}), we have $T_\beta(\beta)=0$, which is equivalent to $G[p_\beta^*(z|x)]=\beta$ after simple manipulation.

\end{proof}

\section{Invariance of \titlemath{$\G[r(z|x);p(z|x)]$} to addition of a global representation}
\label{app:invariance_to_sz}

Here we prove the following lemma:

\begin{lemma}
\label{lemma:G_invariance}
$\G[r(z|x);p(z|x)]$ defined in Lemma \ref{lemma:second_order_variation} is invariant to the transformation $r'(z|x)\gets r(z|x)+s(z)$.
\end{lemma}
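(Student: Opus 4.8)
The plan is to show that the numerator $A-C$ and the denominator $B-C$ of $\G[r(z|x);p(z|x)]=\frac{A-C}{B-C}$, with $A,B,C$ as labeled in Eq.~(\ref{eq:Jensens}), are \emph{each} unchanged under the substitution $r'(z|x)\gets r(z|x)+s(z)$; the invariance of the ratio then follows at once. First I would record how the induced perturbations transform: since $r(z)=\E_{x\sim p(x|z)}[r(z|x)]$ and $r(z|y)=\E_{x\sim p(x|y,z)}[r(z|x)]$ are conditional expectations in $x$ while $s(z)$ is constant in $x$, we have $r'(z)=r(z)+s(z)$ and $r'(z|y)=r(z|y)+s(z)$.

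Next I would expand the squares. For the first term, $A'=\E_{x,z\sim p(x,z)}[(r(z|x)+s(z))^2]=A+2\E_{z\sim p(z)}[s(z)\,r(z)]+\E_{z\sim p(z)}[s^2(z)]$, pulling $s(z)$ out of the inner $x$-expectation. For the third term, $C'=\E_{z\sim p(z)}[(r(z)+s(z))^2]=C+2\E_{z\sim p(z)}[s(z)\,r(z)]+\E_{z\sim p(z)}[s^2(z)]$. Subtracting, the two correction terms cancel identically, so $A'-C'=A-C$.

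For the denominator the only extra ingredient is the tower property together with the Markov structure $Z-X-Y$: since $p(x|z)=\sum_y p(y|z)\,p(x|y,z)$ we get $\E_{y\sim p(y|z)}[r(z|y)]=\E_{y\sim p(y|z)}\E_{x\sim p(x|y,z)}[r(z|x)]=\E_{x\sim p(x|z)}[r(z|x)]=r(z)$. Hence $B'=\E_{y,z\sim p(y,z)}[(r(z|y)+s(z))^2]=B+2\E_{z\sim p(z)}[s(z)\,r(z)]+\E_{z\sim p(z)}[s^2(z)]$, which carries exactly the same correction as $C'$, so $B'-C'=B-C$. Combining, $\G[r'(z|x);p(z|x)]=\frac{A'-C'}{B'-C'}=\frac{A-C}{B-C}=\G[r(z|x);p(z|x)]$, as claimed.

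There is essentially no obstacle; the only care needed is bookkeeping — tracking which expectation a given factor can be pulled through, and invoking the law of total expectation for the identity $\E_{y\sim p(y|z)}[r(z|y)]=r(z)$. I would also remark that the argument nowhere uses $\E_{z\sim p(z|x)}[r(z|x)]=0$, so the same invariance holds on the larger class $\QQA$, which is precisely what licenses replacing $r(z|x)\in\QQ$ by $r(z|x)\in\QQA$ in Eq.~(\ref{eq:G}) when proving Theorem~\ref{thm:rho_r_property}.
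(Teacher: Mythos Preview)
Your proof is correct and follows essentially the same approach as the paper: both establish $r'(z)=r(z)+s(z)$ and $r'(z|y)=r(z|y)+s(z)$, then expand the squares to show the cross and square terms in $s(z)$ cancel identically between numerator and denominator. Your write-up is slightly more explicit than the paper's in invoking the tower property $\E_{y\sim p(y|z)}[r(z|y)]=r(z)$ for the $B'$ cross-term (the paper simply writes ``symmetrically''), and your closing remark that the argument nowhere uses $\E_{z\sim p(z|x)}[r(z|x)]=0$ is exactly the observation the paper exploits in the proof of Theorem~\ref{thm:rho_r_property}.
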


\begin{proof}
When we $r(z|x)$ is shifted by  a global transformation $r'(z|x)\gets r(z|x)+s(z)$, we have $r'(z)\gets\E_{x\sim p(x|z)}[r(z|x)+s(z)]=\E_{x\sim p(x|z)}[r(z|x)]+s(z)\E_{x\sim p(x|z)}[1]=r(z)+s(z)$, and similarly $r'(z|y)\gets r(z|y)+s(z)$.

The numerator of $\G[r(z|x);p(z|x)]$ is then

\beqa
&\E_{x,z\sim p(x,z)}\left[\left(r'(z|x)\right)^2\right]-\E_{z\sim p(z)}\left[\left(r'(z)\right)^2\right]\\
=&\E_{x,z\sim p(x,z)}\left[\left(r(z|x)+s(z)\right)^2\right]-\E_{z\sim p(z)}\left[\left(r(z)+s(z)\right)^2\right]\\
=&\left(\E_{x,z\sim p(x,z)}\left[r^2(z|x)\right]+2\E_{x,z\sim p(x,z)}\left[r(z|x)s(z)\right]+\E_{x,z\sim p(x,z)}\left[s^2(z)\right]\right)\\
&-\left(\E_{z\sim p(z)}\left[r^2(z)\right]+2\E_{z\sim p(z)}\left[r(z)s(z)\right]+\E_{z\sim p(z)}\left[s^2(z)\right]\right)\\
=&\left(\E_{x,z\sim p(x,z)}\left[r^2(z|x)\right]+2\E_{z\sim p(z)}\left[s(z)\E_{x\sim p(x|z)}\left[r(z|x)\right]\right]+\E_{z\sim p(z)}\left[s^2(z)\right]\right)\\
&-\left(\E_{z\sim p(z)}\left[r^2(z)\right]+2\E_{z\sim p(z)}\left[r(z)s(z)\right]+\E_{z\sim p(z)}\left[s^2(z)\right]\right)\\
=&\left(\E_{x,z\sim p(x,z)}\left[r^2(z|x)\right]+2\E_{z\sim p(z)}\left[r(z)s(z)\right]+\E_{z\sim p(z)}\left[s^2(z)\right]\right)\\
&-\left(\E_{z\sim p(z)}\left[r^2(z)\right]+2\E_{z\sim p(z)}\left[r(z)s(z)\right]+\E_{z\sim p(z)}\left[s^2(z)\right]\right)\\
=&\E_{x,z\sim p(x,z)}\left[r^2(z|x)\right]-\E_{z\sim p(z)}\left[r^2(z)\right]
\eeqa

Symmetrically, we have
$$\E_{y,z\sim p(y,z)}\left[\left(r'(z|y)\right)^2\right]-\E_{z\sim p(z)}\left[\left(r'(z)\right)^2\right]=\E_{y,z\sim p(y,z)}\left[r^2(z|y)\right]-\E_{z\sim p(z)}\left[r^2(z)\right]$$

Therefore, $\G[r(z|x);p(z|x)]=\frac{\E_{x,z\sim p(x,z)}\left[r^2(z|x)\right]-\E_{z\sim p(z)}\left[r^2(z)\right]}{\E_{y,z\sim p(y,z)}\left[r^2(z|y)\right]-\E_{z\sim p(z)}\left[r^2(z)\right]}$ is invariant to $r'(z|x)\gets r(z|x)+s(z)$.
\end{proof}

\section{Proof of Theorem \ref{thm:rho_r_property}}
\label{app:representational_maximum_correlation}

\begin{proof}
Using the condition of the theorem, we have that $\forall r(z|x)\in\Q^0_{\Z|\X}$, there exists $r_1(z|x)\in\QQ$ and $s(z)\in\{s:\Z\to\R| s\text{ bounded}\}$ s.t. $r(z|x)=r_1(z|x)+s(z)$. Note that the only difference between $\QQ$ and $\QQA$ is that $\QQ$ requires $\E_{p(z|x)}[r_1(z|x)]=0$. Using Lemma \ref{lemma:G_invariance}, we have

$$\inf_{r(z|x)\in\QQA}\G[r(z|x);p(z|x)]=\inf_{r_1(z|x)\in\QQ}\G[r_1(z|x);p(z|x)]=G[p(z|x)]$$

where $r(z|x)$ doesn't have the constraint of $\E_{p(z|x)}[\cdot]=0$.

After dropping the constraint of $\E_{z\sim p(z|x)}[r(z|x)]=0$, again using Lemma \ref{lemma:G_invariance}, we can let $r(z)=\E_{x\sim p(x|z)}[r(z|x)]=0$ (since we can perform the transformation $r'(z|x)\gets r(z|x)-r(z)$, so that the new $r'(z)\equiv0$). Now we get a simpler formula for $G[p(z|x)]$, as follows:

\beq{eq:G_without_rz}
G[p(z|x)]=\inf_{r(z|x)\in\QQB}\frac{\E_{x,z\sim p(x,z)}[r^2(z|x)]}{\E_{y,z\sim p(y,z)}\left[\left(\E_{x\sim p(x|y,z)}[r(z|x)]\right)^2\right]}
\eeq

where $\QQB:=\{r:\X\times\Z\to\R\ \big|\E_{x\sim p(x|z)}[r(z|x)]=0, r\text{ bounded}\}$.

From Eq. (\ref{eq:G_without_rz}), we can further require that $\E_{x,z\sim p(x,z)}[r^2(z|x)]=1$. Define

\begin{equation}
\label{eq:rho_s}
\rho_s^2(X,Y;Z):=\sup_{f(X,Z)\in\QQC}\E[\left(\E[f(X,Z)|Y,Z]\right)^2]=\sup_{f(x,z)\in\QQC}\E_{y,z\sim p(y,z)}\left[\left(\E_{x\sim p(x|y,z)}[f(x,z)]\right)^2\right]
\end{equation}

where\footnote{In the definition of $\rho_r(X,Y;Z)$, we have used an equivalent format $f(x,z)$ instead of $r(z|x)$.} $\QQC:=\{r:\X\times\Z\to\R\ \big|\E_{x\sim p(x|z)}[r(z|x)]=0, \E_{x,z\sim p(x,z)}[r^2(z|x)]=1, r\text{ bounded}\}$. Comparing with Eq. (\ref{eq:G_without_rz}), it immediately follows that

$$G[p(z|x)]=\frac{1}{\rho_s^2(X,Y;Z)}$$

\textbf{(i)} We only have to prove that $\rho_s(X,Y;Z)=\rho_r(X,Y;Z)$, where $\rho_r(X,Y;Z)$ is defined in Definition \ref{def:rho_r}.

We have
\begin{equation*}
\begin{aligned}
&\E[f(X,Z)g(Y,Z)]\\
=&\int dxdydz p(x,y,z)f(x,z)g(y,z)\\
=&\int dydz p(y,z)g(y,z)\int dx p(x|y,z)f(x,z)\\
\equiv&\int dydz p(y,z)g(y,z)F(y,z)\\
\le&\sqrt{\int dydzp(y,z)g^2(y,z)}\cdot \sqrt{\int dydzp(y,z)F^2(y,z)}
\end{aligned}
\end{equation*}

where $F(y,z):= \int dx p(x|y,z)f(x,z)$. We have used Cauchy-Schwarz inequality, where the equality holds when $g(y,z)=\alpha F(y,z)$ for some $\alpha$. Since $\E[g^2(y,z)]=1$, we have $\alpha^2\E[F^2(y,z)]=1$. Taking the supremum of $(\E[f(X,Z)g(Y,Z)])^2$ w.r.t. $f$ and $g$, we have
\begin{equation*}
\begin{aligned}
\rho_r^2(X,Y;Z)
=&\sup_{(f(X,Z),g(Y,Z))\in\S_1}(\E[f(X,Z)g(Y,Z)])^2\\
=&\sup_{(f(x,z),g(y,z))\in\S_1}\int dydzp(y,z)g^2(y,z)\cdot \int dydzp(y,z)F^2(y,z)\\
=&\sup_{f(x,z)\in\QQC}\int dydzp(y,z)F^2(y,z)\\
=&\sup_{f(x,z)\in\QQC}\int dydzp(y,z)\left(\int dx p(x|y,z)f(x,z)\right)^2\\
=&\sup_{f(X,Z)\in\QQC}\E[\left(\E[f(X,Z)|Y,Z]\right)^2]\\
\equiv&\rho_s^2(X,Y;Z)
\end{aligned}
\end{equation*}

Here $\S_1$ is defined in Definition \ref{def:rho_r}. By definition both $\rho_r(X,Y;Z)$ and $\rho_s(X,Y;Z)$ take non-negative values. Therefore, 

\begin{equation}
\label{eq:rho_s_rho_r}
\rho_s(X,Y;Z)=\rho_r(X,Y;Z)
\end{equation}

\textbf{(ii)} 
Using the definition of $\rho_r(X,Y;Z)$, we have

\begin{equation*}
\begin{aligned}
&\rho_r^2(X,Y;Z)\\
\equiv&\sup_{f(x,z)\in\QQC}\int dydzp(y,z)\left(\int dx p(x|y,z)f(x,z)\right)^2\\
=&\sup_{f(x,z)\in\QQC}\int dzp(z)\int dy p(y|z)\left(\int dx p(x|y,z)f(x,z)\right)^2\\
\equiv&\sup_{f(x,z)\in\QQC}\int dzp(z)W[f(x,z)]\\
\end{aligned}
\end{equation*}

where $W[f(x,z)]:=\int dy p(y|z)\left(\int dx p(x|y,z)f(x,z)\right)^2$.

Denote $c(z):= p(z)\E_{x\sim p(x|z)}[f^2(x,z)]$, we have $\int c(z)dz=\E_{x,z\sim p(x,z)}[f^2(x,z)]=1$. Then the supremum $\rho_r^2(X,Y;Z)=\sup\limits_{f(x,z)\in\QQC}\int dzp(z)W[f(x,z)]$ is equivalent to the following two-stage supremum:

\beq{eq_ap:two_stage_supremum}
\rho_r^2(X,Y;Z)=\sup_{c(z):\int c(z)dz=1}\int dz p(z)\sup_{f(x,z)\in\QQD}W[f(x,z)]
\eeq

where $\QQD:=\{\X\times\Z\to\R\ \big|\ \E_{x\sim p(x|z)}[f^2(x,z)]=\frac{c(z)}{p(z)}, \E_{x\sim p(x|z)}[f(x,z)]=0, f\text{ bounded}\}$
We can think of the inner supremum $\sup_{f(x,z)\in\QQD}W[f(x,z)]$ as only w.r.t. $x$, for some given $z$.

Now let's consider another supremum:

\beq{eq:rho_r_x}
\sup_{h(x)\in\Q_{\X}^{(h)}}\int dy p(y|z)\left(\int dx p(x|y,z)h(x)\right)^2
\eeq

where $\Q_{\X}^{(h)}:=\{h:\X\to\R\big|\ \E_{p(x|z)}[h(x)]=0, \E_{p(x|z)}[h^2(x)]=1, h \text{ bounded}\}$. Using similar technique in (ii), it is easy to prove that it equals $\rho_m^2(X,Y|Z)$ as defined in Definition \ref{def:rho_r}.

Comparing Eq. (\ref{eq:rho_r_x}) and the supremum:
$$\sup_{f(x,z)\in\QQD}W[f(x,z)]$$
we see that the only difference is that in the latter $\E_{x\sim p(x|z)}[f^2(x,z)]$ equals $\frac{c(z)}{p(z)}$ instead of 1. Since $W[f(x,z)]$ is a quadratic functional of $f(x,z)$, we have 

$$\sup_{f(x,z)\in\QQD}W[f(x,z)]=\frac{c(z)}{p(z)}\rho_m^2(X,Y|Z)$$

Therefore,

\beqa
\rho_r(X,Y;Z)&=\sup_{c(z):\int c(z)dz=1}\int dz\, p(z)\sup_{f(x,z)\in\QQD}W[f(x,z)]\\
&=\sup_{c(z):\int c(z)dz=1}\int dz \,p(z)\frac{c(z)}{p(z)}\rho_m^2(X,Y|Z)\\
&=\sup_{c(z):\int c(z)dz=1}\int dz \, c(z)\rho_m^2(X,Y|Z=z)\\
&=\sup_{Z\in\Z}\rho_m^2(X,Y|Z)\\
\eeqa

where in the last equality we have let $c(z)$ have ``mass'' only on the place where $\rho_m^2(X,Y|Z=z)$ attains supremum w.r.t. $z$.

\textbf{(iii)} When $Z$ is a continuous variable, let $f(x,z)=f_X(x)\sqrt{\frac{\delta(z-z_0)}{p(z)}}$, where $\delta(\cdot)$ is the Dirac-delta function, $z_0$ is a parameter, $f_X(x)\in\Q_{\X|\Z}^{(f)}$, with $\Q_{\X|\Z}^{(f)}:=\{f_X:\X\to\R\,\big|\,f_X\text{ bounded}; \forall Z\in\Z: \E_{X\sim p(X|Z)}[f_X(x)]=0, \E_{X\sim p(X|Z)}[f_X^2(x)]=1\}$. We have

\begin{equation*}
\begin{aligned}
\E_{x\sim p(x|z)}[f(x,z)]&=\int p(x|z)f(x,z)dx\\
&=\sqrt{\frac{\delta(z-z_0)}{p(z)}}\int p(x|z)f_X(x)dx\\
&=\sqrt{\frac{\delta(z-z_0)}{p(z)}}\cdot 0\\ 
&= 0
\end{aligned}
\end{equation*}

And 
\begin{equation*}
\begin{aligned}
\E[f^2(X,Z)]=&\int p(x,z)f^2(x,z)dxdz\\
=&\int p(x,z)f_X^2(x)\frac{\delta(z-z_0)}{p(z)}dxdz\\
=&\int dz \delta(z-z_0)\int dx p(x|z)f_X^2(x)dx\\
=&\int dz \delta(z-z_0)\cdot 1\\
=&1
\end{aligned}
\end{equation*}

Therefore, such constructed $f(x,z)=f_X(x)\sqrt{\frac{\delta(z-z_0)}{p(z)}}\in\QQC$, satisfying the requirement for $\rho_s(X,Y;Z)$ (which equals $\rho_r(X,Y;Z)$ by Eq. \ref{eq:rho_s_rho_r}).

Substituting in the special form of $f(x,z)$ into the expression of $\rho_s(X,Y;Z)$ in Eq. (\ref{eq:rho_s}), we have

\begin{equation*}
\begin{aligned}
&\sup_{f(x,z):f(x,z)=f_X(x)\sqrt{\frac{\delta(z-z_0)}{p(z)}},f_X(x)\in\QQf}\int dzp(z)\int dy p(y|z)\left(\int dx p(x|y,z)f(x,z)\right)^2\\
=&\sup_{f_X(x)\in\QQf,z_0\in\Z}\int dzp(z)\int dy p(y|z)\left(\int dx p(x|y,z)f_X(x)\sqrt{\frac{\delta(z-z_0)}{p(z)}}\right)^2\\
=&\sup_{z_0\in\Z}\int dzp(z)\frac{\delta(z-z_0)}{p(z)}\sup_{f_X(x)\in\QQf}\int dy p(y|z)\left(\int dx p(x|y,z)f_X(x)\right)^2\\
=&\sup_{z_0\in\Z}\int dz\delta(z-z_0)\sup_{f_X(X)\in\QQf}\E[(\E[f_X(X)|Y,Z=z])^2|Z=z]\\
=&\sup_{z_0\in\Z}\int dz\delta(z-z_0)\rho_m^2(X,Y|Z=z)\\
=&\sup_{z_0\in\Z}\rho_m^2(X,Y|Z=z_0)\\
=&\sup_{Z\in\Z}\rho_m^2(X,Y|Z)\\
\end{aligned}
\end{equation*}

We can identify $\sup_{f_X(X)\in\QQf}\E[(\E[f_X(X)|Y,Z=z])^2|Z=z]$ with $\rho_m^2(X,Y|Z=z)$ because $f_X(x)$ satisfies the requirement for conditional maximum correlation that $\E_{p(x|z)}[f_X(x)]=0$ and $\E_{p(x|z)}[f_X^2(x)]=1$, for any $z$, and using the same technique in (i), it is straightforward to prove that $\sup_{f_X(X)\in\QQf}\E[(\E[f_X(X)|Y,Z=z])^2|Z=z]$ equals the conditional maximum correlation as defined in Definition \ref{def:rho_r}.

Since the conditional maximum correlation can be viewed as the maximum correlation between $X$ and $Y$, where $X,Y\sim p(X,Y|Z)$, using the equality of $\left(\beta_0[h(x)]\right)^{-1}=\rho_m^2(X;Y)$ (Eq. 7 in \cite{wu2019learnability}), we can identify the $h(x)$ in $\beta_0[h(x)]$ with the $f_X(X)$ here, and an optimal $f_X^*(X)$ that maximizes $\rho_m^2(X,Y|Z)$ is also an optimal $h^*(x)$ that minimizes $\beta_0[h(x)]$.

\textbf{(iv)}
For discrete $X$, $Y$ and $Z$ and a given $Z=z$, let $Q_{X,Y|Z}:=\left(\frac{p(x,y|z)}{\sqrt{p(x|z)p(y|z)}}\right)_{x,y}=\left(\frac{p(x,y)}{\sqrt{p(x)p(y)}}\sqrt{\frac{p(z|x)}{p(z|y)}}\right)_{x,y}$, we first prove that its second largest singular value is $\rho_m^2(X,Y|Z)=\sup\limits_{(f,g)\in\S_2}\E_{x,y\sim p(x,y|z)}[f(x)g(y)]$ ($\S_2$ is defined in Definition \ref{def:rho_r}). 

Let column vectors $u_1=\sqrt{p(x|z)}$ and $v_1=\sqrt{p(y|z)}$ (note that $z$ is given and fixed). Also let $u_2=f(x)\sqrt{p(x|z)}$ and $v_2=g(y)\sqrt{p(y|z)}$. Denote inner product $\langle u,v\rangle\equiv \sum_i u_iv_i$, and the length of a vector as $||u||=\sqrt{\langle u,u\rangle}$. We have $||u_1||=||v_1||=1$ due to the normalization of probability, $||u_2||=||v_2||=1$ due to $\E_{x\sim p(x|z)}[f^2(x)]=\E_{y\sim p(y|z)}[g^2(y)]=1$, and $\langle u_1, u_2\rangle=\langle v_1, v_2\rangle=0$ due to $\E_{x\sim p(x|z)}[f(x)]=\E_{y\sim p(y|z)}[g(y)]=0$. Furthermore, we have

$$\sup_{(f,g)\in\S_2}\E_{x,y\sim p(x,y|z)}[f(x)g(y)]=\max_{u,v}u^T Q_{X,Y|Z} v$$

which is exactly the second largest singular value $\sigma_2(Z)$ of the matrix $Q_{X,Y|Z}$. Using the result in \textbf{(ii)}, we have that $\rho_r(X,Y;Z)=\max\limits_{Z\in\Z}\sigma_2(Z)$.

\end{proof}

\section{Subset separation at phase transitions}
\label{app:subset_separation}

\begin{figure}[b]
\begin{center}
\includegraphics[width=0.3\columnwidth]{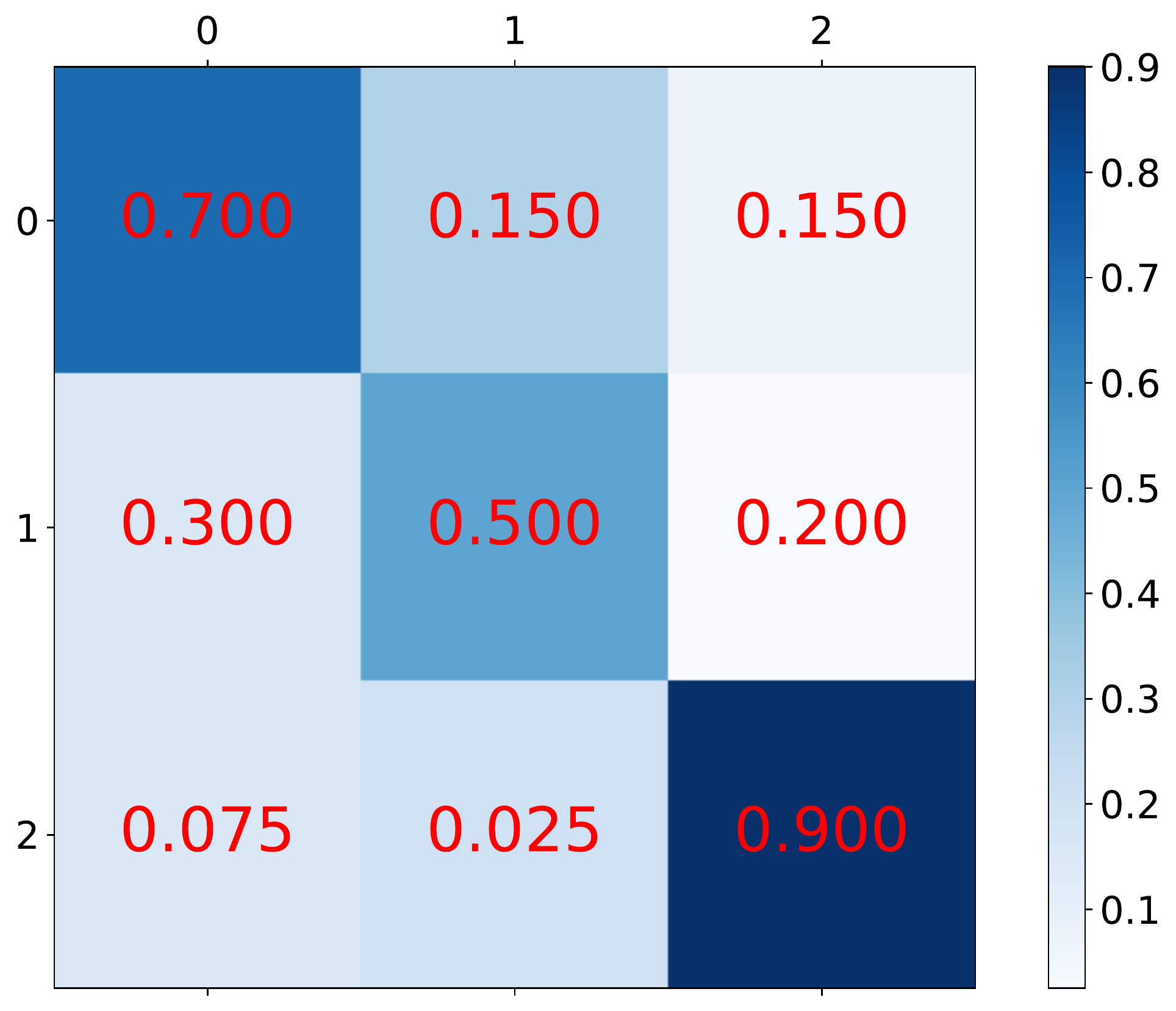}
\end{center}
\caption{$p(y|x)$ for the categorical dataset in Fig. \ref{fig:beta_th_vs_beta} and Fig. \ref{fig:subset_separation}. The value in $i^\text{th}$ row and $j^\text{th}$ column denotes $p(y=j|x=i)$. $p(x)$ is uniform.}
\label{fig:py_x}
\end{figure}

In this section we study the behavior of $p(z|x)$ on the phase transitions. We use the same categorical dataset (where $|X|=|Y|=|Z|=3$ and $p(x)$ is uniform, and $p(y|x)$ is given in Fig. \ref{fig:py_x}). In Fig. \ref{fig:subset_separation} we show the $p(z|x)$ on the simplex before and after each phase transition. We see that the first phase transition corresponds to the separation of $x=2$ (belonging to $y=2$) w.r.t. $x\in\{0,1\}$ (belonging to classes $y\in\{0,1\}$), on the $p(z|x)$ simplex. The second phase transition corresponds to the separation of $x=0$ with $x=1$. Therefore, each phase transition corresponds to the ability to distinguish subset of examples, and learning of new classes.

\begin{figure}[hp]
\begin{center}
\begin{subfigure}[b]{0.51\textwidth}
\centering
\includegraphics[width=\textwidth]{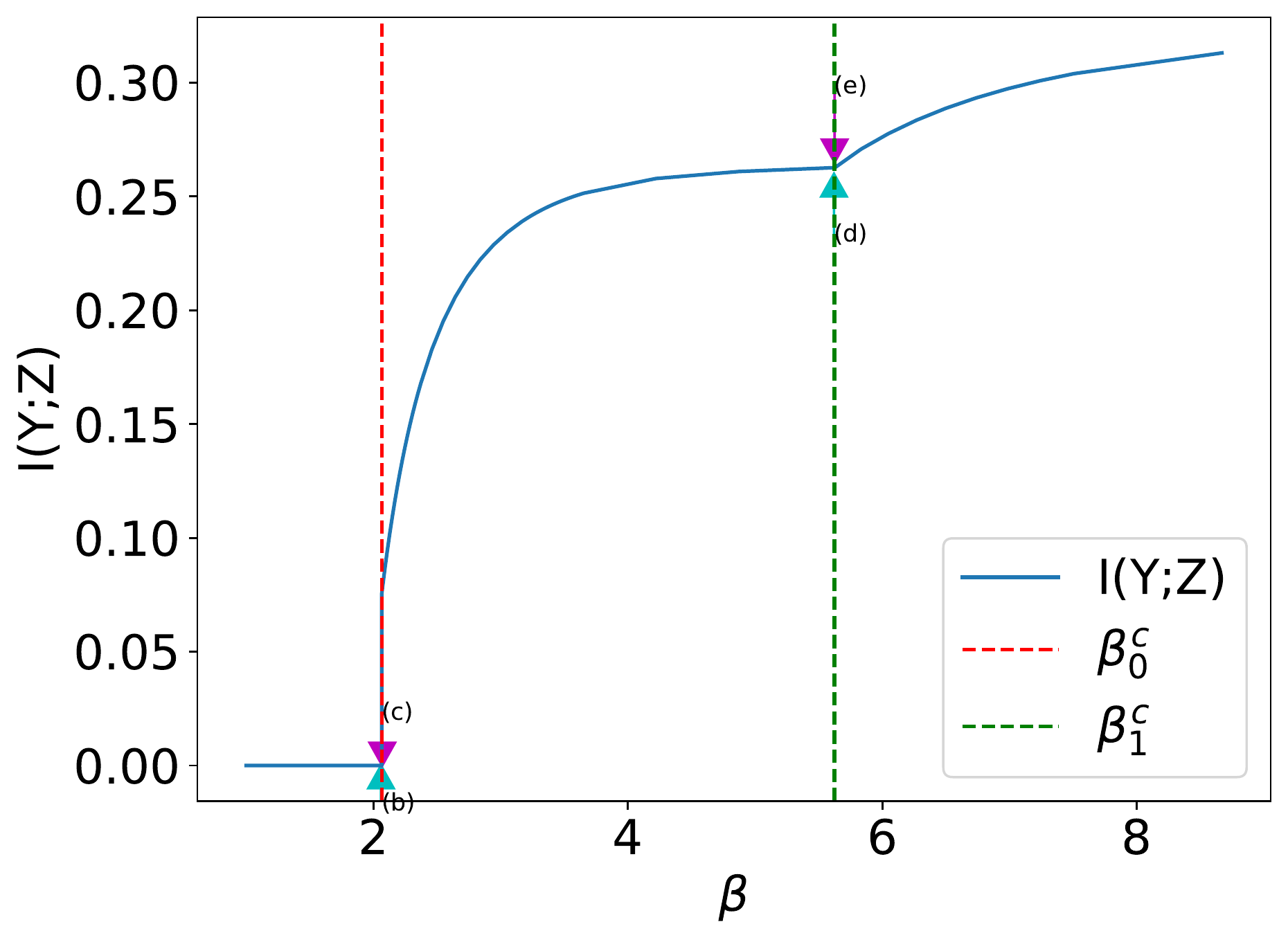}
\caption{}
\end{subfigure}
\begin{subfigure}[b]{0.49\textwidth}
\centering
\includegraphics[width=\textwidth]{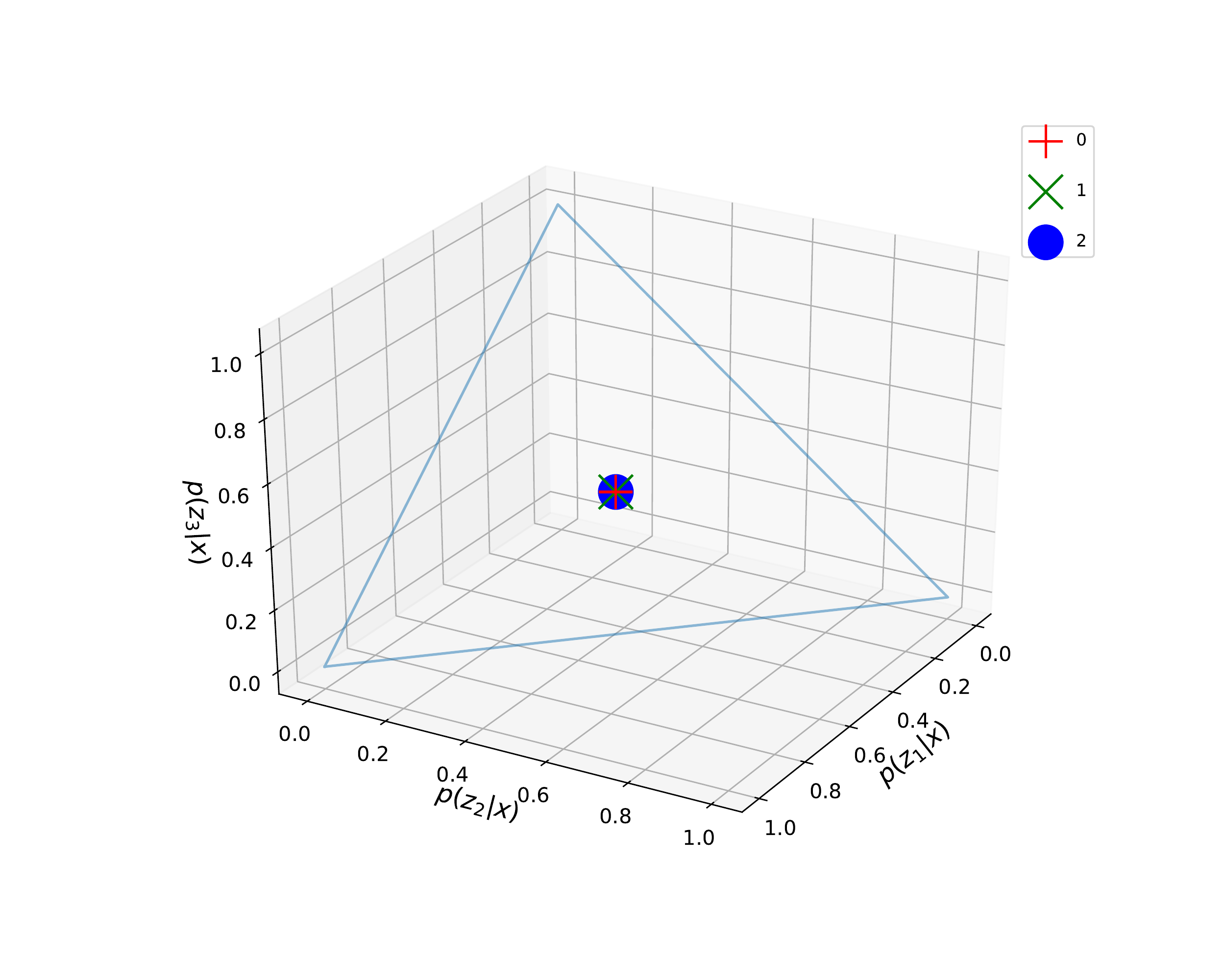}
\caption{}
\end{subfigure}
\begin{subfigure}[b]{0.49\textwidth}
\centering
\includegraphics[width=\textwidth]{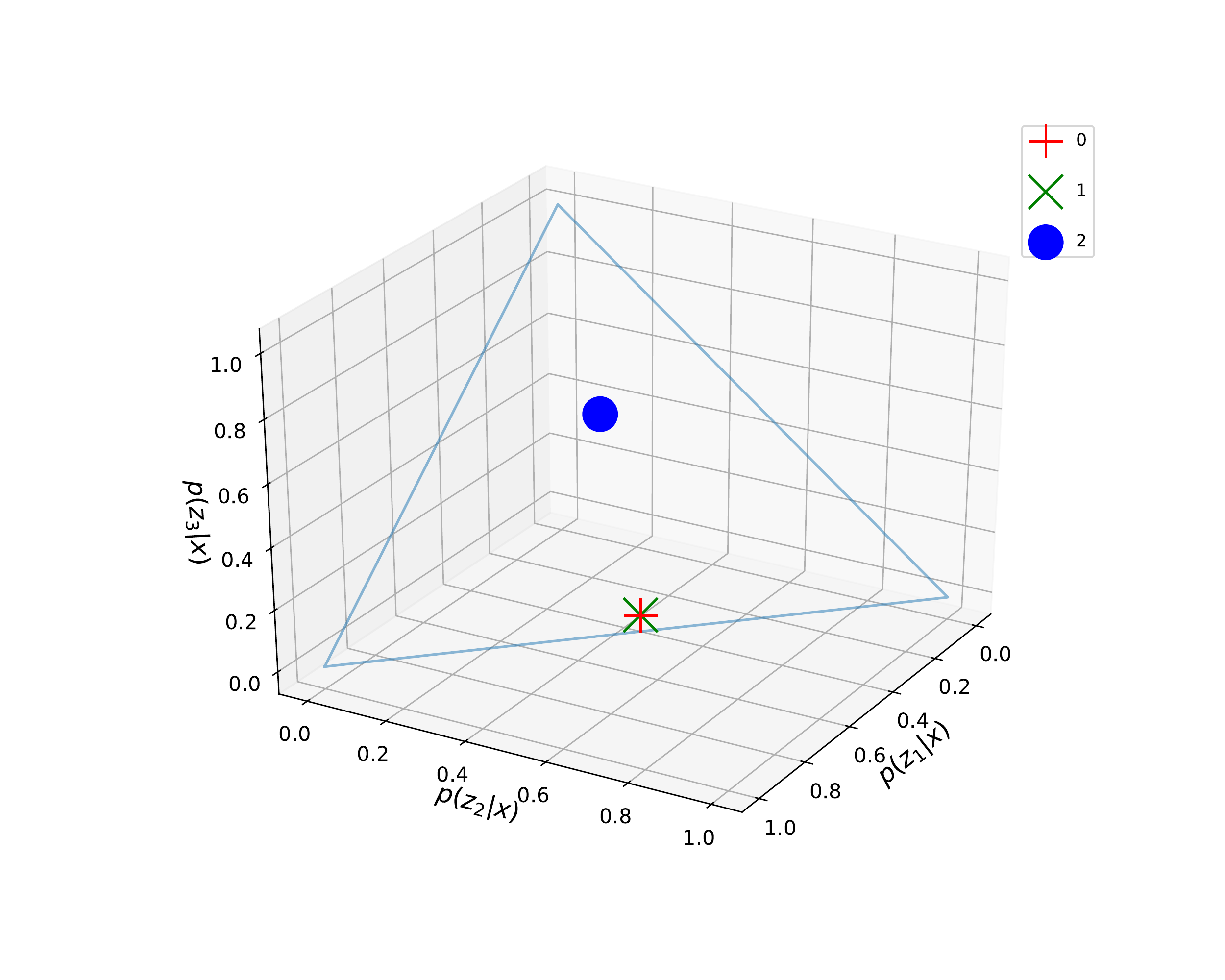}
\caption{}
\end{subfigure}
\begin{subfigure}[b]{0.49\textwidth}
\centering
\includegraphics[width=\textwidth]{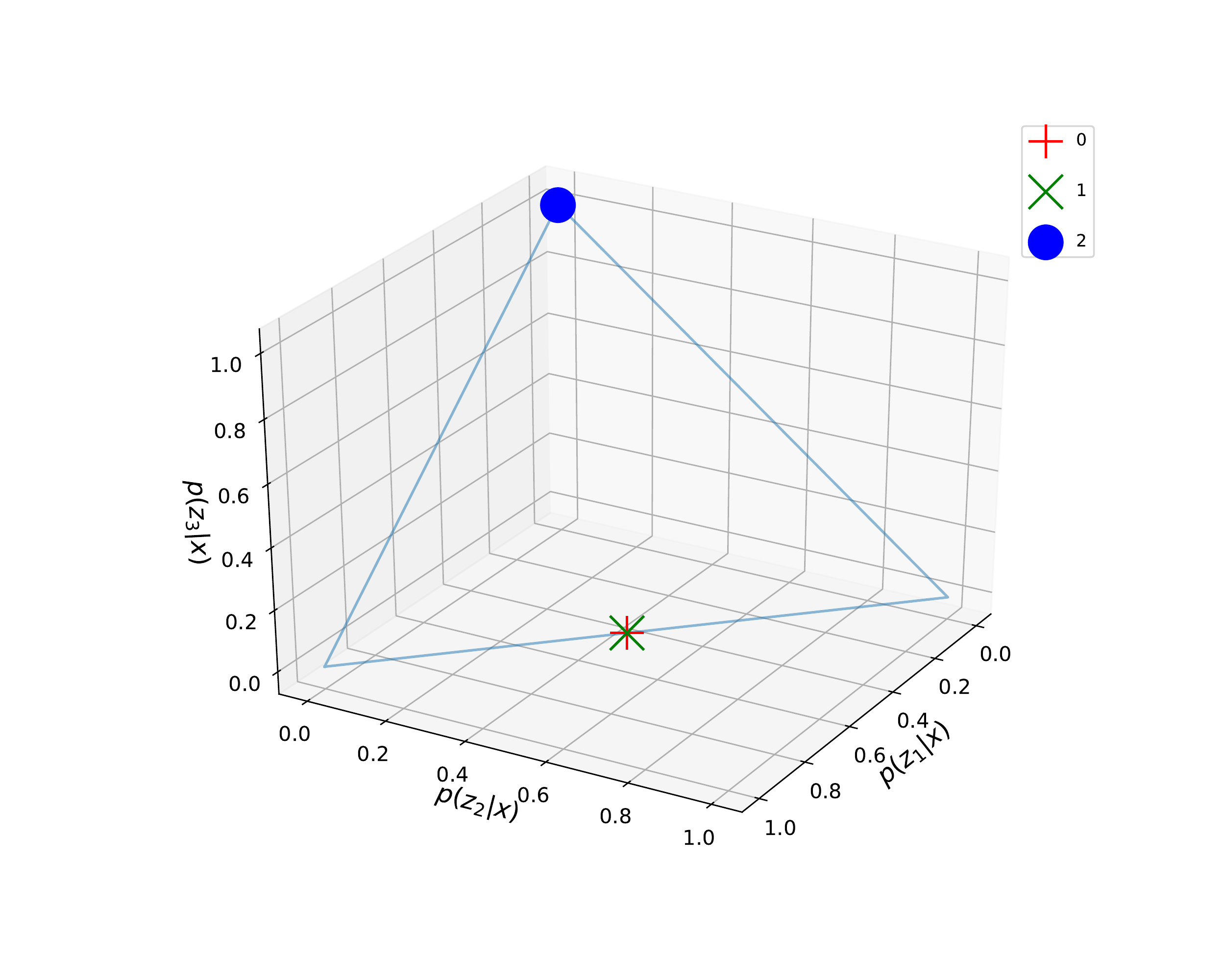}
\caption{}
\end{subfigure}
\begin{subfigure}[b]{0.49\textwidth}
\centering
\includegraphics[width=\textwidth]{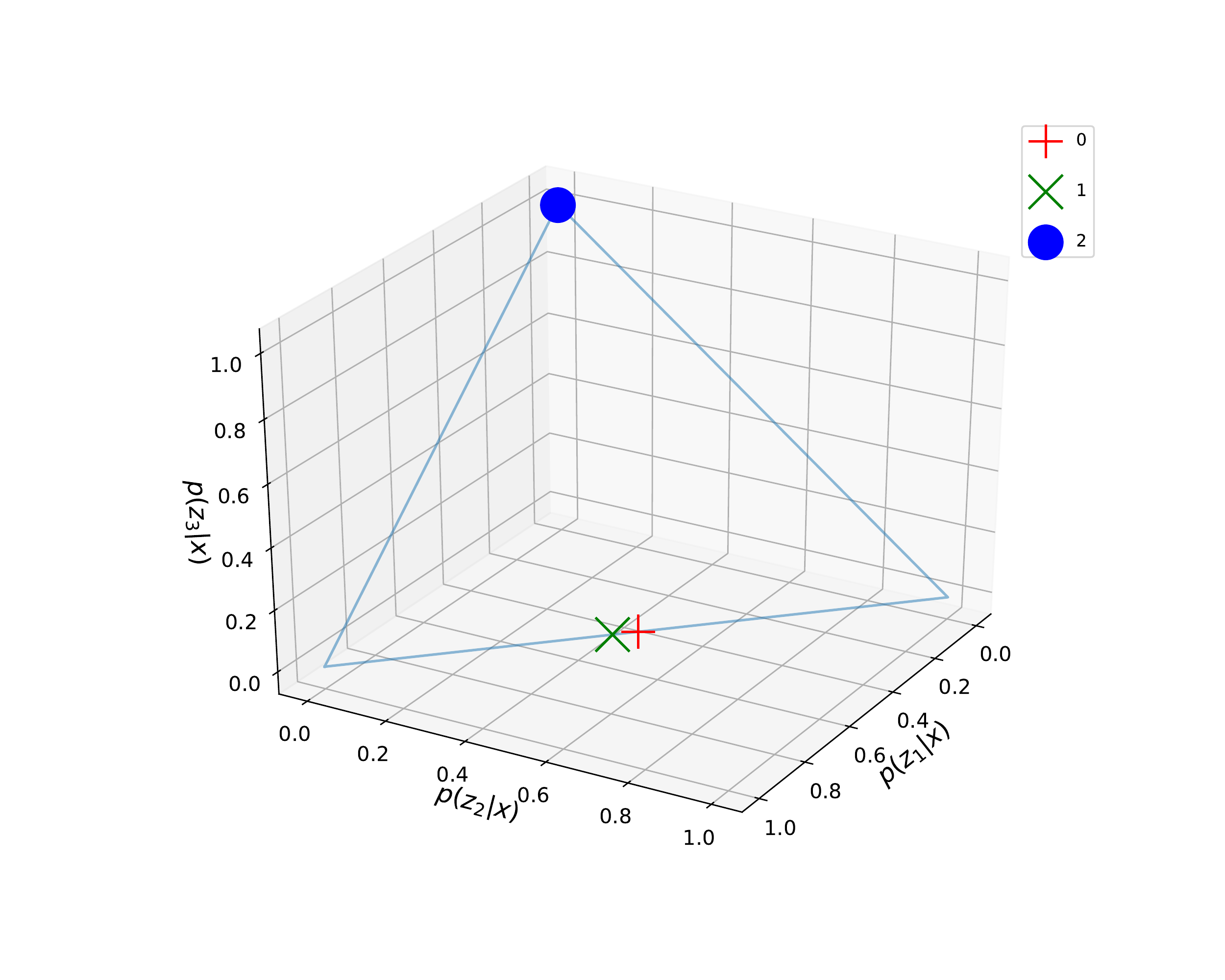}
\caption{}
\end{subfigure}
\end{center}
\caption{(a) $I(Y;Z)$ vs. $\beta$ for the dataset given in Fig. \ref{fig:py_x}. The phase transitions are marked with vertical dashed line, with $\beta_0^c=2.065571$ and $\beta_1^c=5.623333$. (b)-(e) Optimal $p_\beta^*(z|x)$ for four values of $\beta$, i.e. (b) $\beta=2.060$, (c) $\beta=2.070$, (d) $\beta=5.620$ (e) $\beta=5.625$ (their $\beta$ values are also marked in (a)), where each marker  denotes $p(z|x=i)$ for a given $i\in\{0,1,2\}$.
}
\label{fig:subset_separation}
\end{figure}

\section{MNIST Experiment Details}
\label{app:MNIST_exp}

\begin{figure}[!ht]
\begin{center}
\includegraphics[width=0.3\columnwidth]{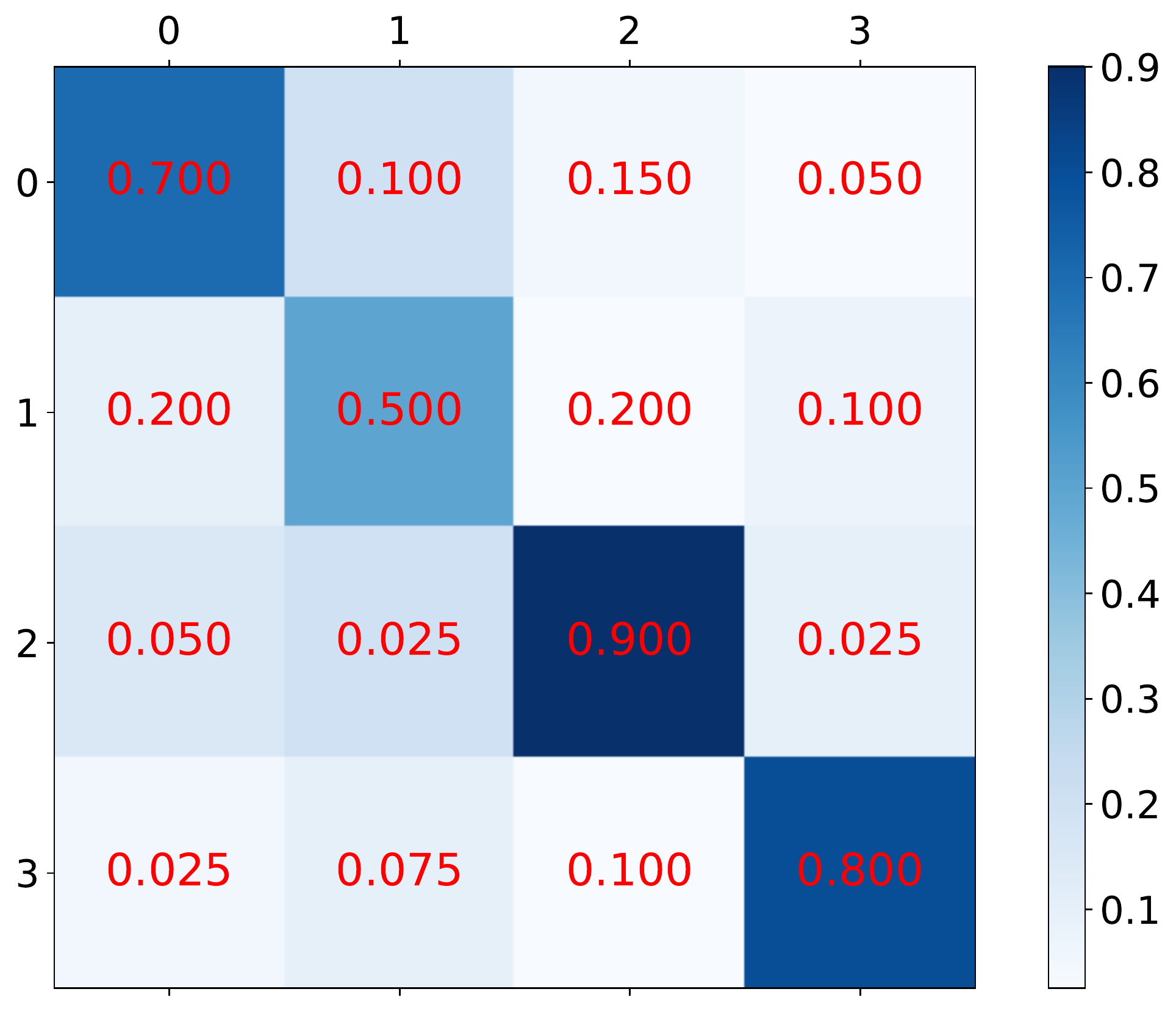}
\end{center}
\caption{Confusion matrix for MNIST experiment. The value in $i^\text{th}$ row and $j^\text{th}$ column denotes $p(\tilde{y}=j|y=i)$ for the label noise.}
\label{fig:py_x_4}
\end{figure}

We use the MNIST training examples with class $0, 1, 2, 3$, with a hidden label-noise matrix as given in Fig. \ref{fig:py_x_4}, based on which at each minibatch we dynamically sample the observed label. We use conditional entropy bottleneck (CEB) \citep{fischer2018the} as the variational IB objective, and run multiple independent instances with different the target $\beta$. We jump start learning by started training at $\beta=100$ for 100 epochs, annealing $\beta$ from 100 down to the target $\beta$ over 600 epochs, and continue to train at the target epoch for another 800 epochs. The encoder is a three-layer neural net, where each hidden layer has 512 neurons and leakyReLU activation, and the last layer has linear activation. The classifier $p(y|z)$ is a 2-layer neural net with a 128-neuron ReLU hidden layer. The backward encoder $p(z|y)$ is also a 2-layer neural net with a 128-neuron ReLU hidden layer. We trained with Adam \citep{kingma2013auto} at learning rate of $10^{-3}$, and  anneal down with factor $1/(1+0.01\cdot \text{epoch})$. For Alg. 1, for the $f_\thetaa$ we use the same architecture as the encoder of CEB, and use $|Z|=50$ in Alg. 1.

\section{CIFAR10 Experiment Details}
\label{app:cifar_details}

We use the same CIFAR10 class confusion matrix provided in~\cite{wu2019learnability} to generate noisy labels with about 20\% label noise on average (reproduced in Table~\ref{table:cifar_confusion}).
We trained $28 \times 1$ Wide ResNet~\citep{resnet,wideresnet} models using the open source implementation from~\cite{autoaugment} as encoders for the Variational Information Bottleneck (VIB)~\citep{alemi2016deep}.
The 10 dimensional output of the encoder parameterized a mean-field Gaussian with unit covariance.
Samples from the encoder were passed to the classifier, a 2 layer MLP.
The marginal distributions were mixtures of 500 fully covariate 10-dimensional Gaussians, all parameters of which are trained.

With this standard model, we trained 251 different models at $\beta$ from 1.0 to 6.0 with step size of 0.02.
As in~\citet{wu2019learnability}, we jump-start learning by annealing $\beta$ from 100 down to the target $\beta$.
We do this over the first 4000 steps of training.
The models continued to train for another 56,000 gradient steps after that, a total of 600 epochs.
We trained with Adam~\citep{adam} at a base learning rate of $10^{-3}$, and reduced the learning rate by a factor of 0.5 at 300, 400, and 500 epochs.
The models converged to essentially their final accuracy within 40,000 gradient steps, and then remained stable.
The accuracies reported in Figure~\ref{fig:phase_cifar_exp} are averaged across five passes over the training set. We use $|Z|=50$ in Alg. 1.

\begin{table}[ht]
\begin{center}
\caption{
Class confusion matrix used in CIFAR10 experiments, reproduced from~\citep{wu2019learnability}.
The value in row $i$, column $j$ means for class $i$, the probability of labeling it as class $j$. The mean confusion across the classes is 20\%.
}
\label{table:cifar_confusion}
\vskip 0.1in
\setlength{\tabcolsep}{4pt}  % default 6pt
\begin{tabular}{r | c c c c c c c c c c }
\tiny
& Plane & Auto. & Bird & Cat & Deer & Dog & Frog & Horse & Ship & Truck \\ [0.2ex]
\hline
\hline\noalign{\smallskip}
Plane &
0.82232 & 0.00238 & 0.021   & 0.00069 & 0.00108 & 0       & 0.00017 & 0.00019 & 0.1473  & 0.00489 \\ [0.2ex]
Auto. &
0.00233 & 0.83419 & 0.00009 & 0.00011 & 0       & 0.00001 & 0.00002 & 0       & 0.00946 & 0.15379 \\ [0.2ex]
Bird &
0.03139 & 0.00026 & 0.76082 & 0.0095  & 0.07764 & 0.01389 & 0.1031  & 0.00309 & 0.00031 & 0       \\ [0.2ex]
Cat &
0.00096 & 0.0001  & 0.00273 & 0.69325 & 0.00557 & 0.28067 & 0.01471 & 0.00191 & 0.00002 & 0.0001  \\ [0.2ex]
Deer &
0.00199 & 0       & 0.03866 & 0.00542 & 0.83435 & 0.01273 & 0.02567 & 0.08066 & 0.00052 & 0.00001 \\ [0.2ex]
Dog &
0       & 0.00004 & 0.00391 & 0.2498  & 0.00531 & 0.73191 & 0.00477 & 0.00423 & 0.00001 & 0       \\ [0.2ex]
Frog &
0.00067 & 0.00008 & 0.06303 & 0.05025 & 0.0337  & 0.00842 & 0.8433  & 0       & 0.00054 & 0       \\ [0.2ex]
Horse &
0.00157 & 0.00006 & 0.00649 & 0.00295 & 0.13058 & 0.02287 & 0       & 0.83328 & 0.00023 & 0.00196 \\ [0.2ex]
Ship &
0.1288  & 0.01668 & 0.00029 & 0.00002 & 0.00164 & 0.00006 & 0.00027 & 0.00017 & 0.83385 & 0.01822 \\ [0.2ex]
Truck &
0.01007 & 0.15107 & 0       & 0.00015 & 0.00001 & 0.00001 & 0       & 0.00048 & 0.02549 & 0.81273 \\ [0.2ex]
\hline
\end{tabular}
\end{center}
\end{table}

\end{document}